\DeclareMathOperator{\T}{\mathcal{T}}
\DeclareMathOperator*{\argmin}{arg\,min}
\DeclareMathOperator*{\argmax}{arg\,max}
\DeclareMathOperator{\vol}{\mathrm{vol}}
\DeclareMathOperator{\Det}{\mathrm{det}^+}
\DeclareMathOperator{\Tr}{\mathrm{Tr}}
\DeclareMathOperator{\adj}{\mathrm{adj}}
\DeclareMathOperator{\rank}{\mathrm{rank}}
\DeclareMathOperator{\Tsc}{\mathcal{T}_{s \rightarrow c}}
\DeclareMathOperator{\M}{\mathcal{M}}
\DeclareMathOperator{\R}{\mathbb{R}}
\DeclareMathOperator{\Tmd}{\T_{m \rightarrow d}}
\DeclareMathOperator{\E}{\text{\large$ \mathbb{E} $}}
\newtheorem{theorem}{Theorem}
\newtheorem{lemma}{Lemma}
\newtheorem{remark}{Remark}
\theoremstyle{definition}
\newtheorem{definition}{Definition}
\newenvironment{proofsketch}{\proof}{\endproof}
\newcommand{\unibas}{%
Department of Mathematics and Computer Science, University of Basel\\
}
\title{Injective flows for star-like manifolds}
\author{Marcello Massimo Negri \& Jonathan Aellen \& Volker Roth \\
\unibas\\
\texttt{\{marcellomassimo.negri, jonathan.aellen, volker.roth\}@unibas.ch} 
}
\begin{document}

\maketitle

\begin{abstract}
Normalizing Flows (NFs) are powerful and efficient models for density estimation. 
When modeling densities on manifolds, NFs can be generalized to injective flows but the Jacobian determinant becomes computationally prohibitive.
Current approaches either consider bounds on the log-likelihood or rely on some approximations of the Jacobian determinant.
In contrast, we propose injective flows for star-like manifolds and show that for such manifolds we can compute the Jacobian determinant exactly and efficiently, with the same cost as NFs.
This aspect is particularly relevant for variational inference settings, where no samples are available and only some unnormalized target is known.
Among many, we showcase the relevance of modeling densities on star-like manifolds in two settings.
Firstly, we introduce a novel Objective Bayesian approach for penalized likelihood models by interpreting level-sets of the penalty as star-like manifolds.
Secondly, we consider probabilistic mixing models and introduce a general method for variational inference by defining the posterior of mixture weights on the probability simplex.
\end{abstract}

\section{Introduction}
\label{sec:introduction}
Normalizing Flows (NFs) are flexible and efficient models that allow us to accurately estimate arbitrary probability distributions.
The key idea is to transform a simple base distribution into a complicated one through a series of bijections.
However, in many applications we know that the target density lies on a certain lower-dimensional manifold.
A common approach is to have the base distribution defined on the lower-dimensional space and use an injective transformation to embed it into higher dimensions.
Unfortunately, the computation of the transformed density involves an expensive Jacobian determinant term, which makes the model computationally prohibitive.
Currently, exact and efficient Jacobian determinant is possible only for trivial manifolds like spheres and tori~\citep{gemici2016flows_riemannian_manifolds, rezende2020flows_tori_spheres} or for very restrictive transformations~\citep{ross2021conformal_flows}.
In practice, most work renounce exact density estimation and approximate the Jacobian determinant term~\citep{kumar2020relaxed_injective_flows,kothari2021trumpets, sorrenson2024injective_flows}, often with high variance estimators.
Exact density estimation might not be critical when training with maximum likelihood, but is crucial in many applications like variational inference, where samples are not available.

In this paper, we introduce injective flows for star-like manifolds and show that we can exactly and efficiently compute the associated Jacobian determinant term, with the same computational cost as NFs.
We consider star-like manifolds with intrinsic dimensionality $d-1$ and embedded in $\mathbb{R}^d$, which is a general class of manifolds that are particularly relevant in many statistical applications.
Note that learning distributions, including the uniform distribution, on arbitrary manifolds is non-trivial and often requires complicated sampling schemes~\citep{pennec2006intrinsic,diaconis2012sampling_manifold}.
Distributions on the hypersphere are crucial in directional statistics, astrophysics, medicine, biology, meteorology and many other fields~\citep{chikuse2012special_manifolds}.
Particularly relevant for geoscience applications is the fact that the Earth is an oblate spheroid, which is a star-like manifold.
The probabilistic simplex, another star-like manifold, is very useful to model vectors that represent true probabilities by construction.
Possible applications include probabilistic mixing models and other variations like Bayesian tracer mixing models, which are common in ecology, geoscience, zoology and many others~\citep{phillips2012isotope_mixing, Stock2018MixSIAR}, and probabilistic treatment of archetype models~\citep{seth2016probabilistic_archetypal, keller2021archetypes}.
In some Bayesian settings, level sets of posteriors with one-parameter priors (e.g. sparsity parameter) define star-like manifolds.
As we will argue, defining the posterior on such manifolds allows for an objective Bayesian treatment of the problem.


We showcase the generality of our approach with two very distinct applications in Bayesian modeling.
First, we introduce a novel Objective Bayesian approach to penalized likelihood methods.
In this case the star-like manifold defines a level-set of the penalty constraint.
Second, we consider probabilistic mixing models and introduce a general framework for variational inference on the mixing weights.
We constrain the posterior on the simplex, such that mixing weights always sum to one, and showcase an application in portfolio analysis.

We summarize the contributions of the present work as follows:
\begin{itemize}
    \item We propose injective flows for star-like manifolds.
    \item In contrast to most existing work on injective flows, we show that we can exactly and efficiently compute the associated Jacobian determinant.
    \item We showcase the relevance and broad applicability of the proposed method in a novel Objective Bayesian approach and for posterior inference in probabilistic mixing models.
\end{itemize}

\section{Background}
\label{sec:preliminaries}
\paragraph{Density and Jacobian determinant for bijective functions}
Let $\bm{x}$ be a $d$-dimensional random variable with unknown distribution $p_x(\bm{x})$ and let $\bm{z}$ be a $d$-dimensional random variable with known base distribution $p_z(\bm{z})$.
The key idea of NFs is to model the unknown distribution $p_x(\bm{x})$ through a transformation $\T : \R^d \mapsto \R^d $ such that $\bm{x}=\T(\bm{z})$.
If $\T$ is a diffeomorphism, i.e. differentiable bijection with differentiable inverse $\T^{-1}$, the change of variable formula \citep{rudin1987real_complex_analysis} allows us to express $p_x(\bm{x})$ solely in terms of the base distribution $p_z(\bm{z})$ and $\T$: $p_x(\bm{x}) = p_z(\bm{z}) \left|\det J_{\T}(\bm{z})\right|^{-1}$,
where $J_{\T}$ is the Jacobian of the transformation $\T$.
The hypothesis can be relaxed to $\T$ being a diffeomorphism a. e.~\citep{munkres1965CalculusOM}, i.e. except on zero-measure sets.
Therefore, the trade-off consists of implementing bijections with tractable $\det J_{\T}$ which are still flexible enough to approximate any well-behaved distribution.
One key idea is to exploit the property that, given a set of bijections $\{\T^{(i)}\}_{i=1}^k$, their composition $\T = \T^{(k)} \circ \cdots \circ \T^{(1)} $ is still a bijection.
Since for bijections the Jacobian is a square matrix, the determinant of a composition of bijections factorizes as the product of the determinants of the individual bijections.
Overall, NFs are built as
\begin{equation}
p_x(\bm{x}) = p_z(\bm{z}) \left|\det J_{\T}(\bm{z})\right|^{-1} \quad \textrm{with} \quad
\det J_{\T}(\bm{z}) = \prod_{i=1}^k \det J_{\T^{(i)}}(\bm{u}_{i-1})
\label{eq:normalizing_flow_determinant}
\end{equation}
where $\bm{u}_{i-1} = \mathcal{T}^{(i-1)}(\bm{u}_{i-2})$ and $\bm{u}_0=\bm{z}$.
In general, computing the $\det J_{\T}$ has a cost of $O(d^3)$ because it requires some form of matrix decomposition (typically LU or QR).
Instead, what most implementation exploit is the property in Eq.~\eqref{eq:normalizing_flow_determinant}: we can efficiently model a bijection $\T$ by stacking simpler bijective layers $\T^{(i)}$ with tractable (analytical) Jacobian determinant.
Typically, each $\det J_{\T^{(i)}}$ is made tractable by designing bijections $\T^{(i)}$ with triangular Jacobian by construction, such that its determinant is given by the diagonal entries, which has a $O(d)$ complexity.

\paragraph{Density and Jacobian determinant for injective functions}
NFs are limited by the use of bijections, which prevents modeling densities on lower dimensional manifolds.
In such cases the target distribution lives on a $m$-dimensional manifold $\M$ embedded in a $d$-dimensional Euclidean space  $\M\subset \R^d$, where $m<d$.
In order to constrain $p_x(\bm{x})$ to live on the manifold $\M$, we need an injective transformation that inflates the dimensionality $\Tmd:\R^m \mapsto \R^d$.
The transformed probability distribution $p_x(\bm{x})$ can still be computed by accounting for the volume change~\citep{benisrael1999change_variables}:
\begin{equation}
p_x(\bm{x}) = p_z(\bm{z}) \left|\vol J_{\Tmd}(\bm{z})\right|^{-1} \quad \textrm{with} \quad \vol J_{\Tmd} = \sqrt{ \det \Big( \big( J_{\Tmd} \big)^T J_{\Tmd}  \Big)} \;, 
\label{eq:manifold_flow_change_variable}
\end{equation}
where $J_{\Tmd(\bm{z})} \in \R^{d \times m}$ is a rectangular matrix.
Note that if $m=d$, $J_{\T}$ is a square matrix so $\vol J_{\T} =\sqrt{\det J_{\T}^T J_{\T}} = \sqrt{\det J_{\T}^T \det J_{\T}} =\det J_{\T}$ and Eq.~\eqref{eq:manifold_flow_change_variable} reduces to Eq.~\eqref{eq:normalizing_flow_determinant}.
Crucially, since $J_{\Tmd}$ is now rectangular, the Jacobian determinant cannot be decomposed as the product of stacked transformations anymore, which is the crucial property that makes bijective flows tractable (see Eq.~\eqref{eq:normalizing_flow_determinant}).
Instead, we need to explicitly compute the Jacobian determinant, which is $O(m^3)$ in general.
This makes injective flows computationally prohibitive for high dimensional manifolds.

Even if we were to stack bijective layers after the injective transformation, we would still be unable to write the overall Jacobian determinants as the product of individual bijections and would remain $O(m^3)$.
To see this, consider the transformation $\T=\T_d \circ \Tmd \circ \T_m$, where $\T_m:\R^m\mapsto\R^m$ and  $\T_d:\R^d\mapsto\R^d$ are arbitrary bijections.
Then, $\det J_{\T}$ factorizes:
\begin{equation}
    \det J_{\T} = \det J_{\T_m} \sqrt{\det\Big( \big( J_{\T_d} J_{\Tmd}  \big)^T J_{\T_d} J_{\Tmd} \Big)} \;.
    \label{eq:composition_injective_flow}
\end{equation}
We refer to Appendix~\ref{sec:appendix_jacobian_injective} for a full derivation.
Note that we can factorize only $\det J_{\T_m}$ (the bijection that precedes the inflating step), while the Jacobian determinant of the bijections $\T_d$ after the inflating step cannot be disentangled.
Naturally, the same would hold true if the bijections $\T_d$ were replaced by arbitrary injective transformations.
In both cases we would need to compute the Jacobian product $J_{\T_d} J_{\Tmd}$ and its determinant, which has cubic complexity.

\section{Injective flows for star-like manifolds}
\label{sec:methodology}
We now present the proposed injective flow to model densities on arbitrary star-like manifolds.
In particular, in Section~\ref{sec:proposed_method_preliminaries} we introduce the definition of star-like manifolds and show that they can always be parametrized with generalized spherical coordinates.
Then, in Section~\ref{sec:proposed_method} we define injective flows on star-like manifolds and show that the Jacobian determinant can be computed exactly and efficiently.
Lastly, in Section~\ref{sec:limitations} we discuss the limitations of the proposed approach.

\subsection{Star-like manifolds}\label{sec:proposed_method_preliminaries}

\begin{definition}
    We call a domain a \emph{star domain} $\mathcal{S}$ if there exists one point $s_0\in\mathcal{S}$ such that, given any other point $s\in\mathcal{S}$ in the domain, the line segment connecting $s_0$ to $s$ lies entirely in $\mathcal{S}$.
    Furthermore, we define \emph{star-like manifold} $\mathcal{M}_\mathcal{S}$ as the manifold defined by the boundary of a star domain.
    In this work we consider $d-1$ dimensional star-like manifolds embedded in $\mathbb{R}^d$.
    \label{def:starlike_manifold}
\end{definition}

\begin{wrapfigure}{r}{0.25\textwidth}
  \begin{center}
    \includegraphics[width=0.2\textwidth]{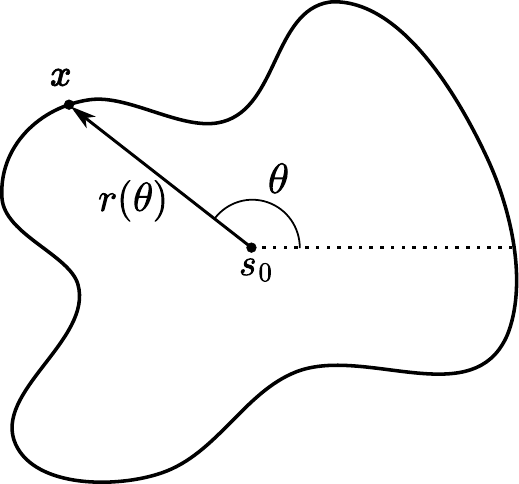}
  \end{center}
  \caption{1D star-like manifold parametrized in spherical coordinates.} 
  \label{fig:star_like_manifold}
\end{wrapfigure}
In other words, a star-like manifold is such there exists a point from which the entire manifold can be ``viewed''.
Intuitively, this suggests that we can always parametrize it with generalized spherical coordinates, which consist of $d-1$ angles $\bm{\theta}\in U^{d-1}_\theta \coloneqq [0,\pi]^{d-2} \times [0, 2\pi] $ and one radius $r\in\R_{>0}$. 
Let $\mathcal{M}_\mathcal{S}$ be a $d-1$ dimensional star-like manifold embedded in $\R^d$.
Then, we need $d-1$ variables to identify any point $\bm{x} \in \mathcal{M}_\mathcal{S}$.
In particular, we can parametrize $\bm{x} = [\bm{\theta}, r(\bm{\theta})]^T$ with $d-1$ spherical angles $\bm{\theta}\in U^{d-1}_\theta$ and a suitable radius function $r(\bm{\theta})$.
If we choose $s_0$ as the origin of the spherical coordinate system, we can define the radius as the line segment connecting $\bm{x}$ and $s_0$.
Crucially, by definition of star-like manifolds, the segment intersects the manifold only once, so the radius is uniquely defined.
See Figure~\ref{fig:star_like_manifold} for a graphical representation.
Star-like manifolds are the most general class of manifolds that always allow such parametrization.


\subsection{Proposed injective flows for star-like manifolds}\label{sec:proposed_method}
We now show how to define injective flows on star-like manifolds such that the Jacobian determinant, and hence the log density, can be evaluated exactly and efficiently.
Relevantly, the modeled distribution is expressed in Cartesian coordinates, which is crucial for most applications.
In order to do so we compose three transformations $\T \coloneqq \Tsc \circ \T_{r} \circ \T_{\theta}$ (see Figure~\ref{fig:architecture_starlike_flow}): (i) an arbitrary diffeomorphism that maps to $d-1$ spherical angles $\T_{\theta}$, (ii) the injective transformation $\T_{r}$ that parametrizes the radius $r(\bm{\theta})$ as a function of the angles and (iii) the coordinate transformation $\Tsc$ from spherical to Cartesian coordinates.
Note that the change of coordinates $\Tsc$ is a diffeomorphism almost everywhere, i.e. except on a zero-measure set.
Therefore, the change of variable formula holds and the probability resulting from the transformation is still exact.
We refer to Appendix~\ref{sec:appendix_spherical_to_cartesian} and ~\ref{sec:appendix_spherical_flow} for the explicit expression of $\Tsc$ and a discussion of its special structure, respectively.
This way the resulting densities will be defined by construction on the manifold parametrized by $\T_{r}$, while the density will be conveniently expressed in Cartesian coordinates.
In other words, the push-forward samples are points in the ambient space.

Naively computing the Jacobian determinant as in Eq.~\eqref{eq:composition_injective_flow} requires cubic complexity.
Instead, in Theorem~\ref{thm:spherical_flow} we show for star-like manifolds how to compute it analytically and efficiently in $O(d^2)$.
We provide a proof sketch of the theorem below and the full proof in the Appendix~\ref{sec:appendix_spherical_flow}.

\begin{figure}[t]
    \centering
    \includegraphics[width=1\linewidth]{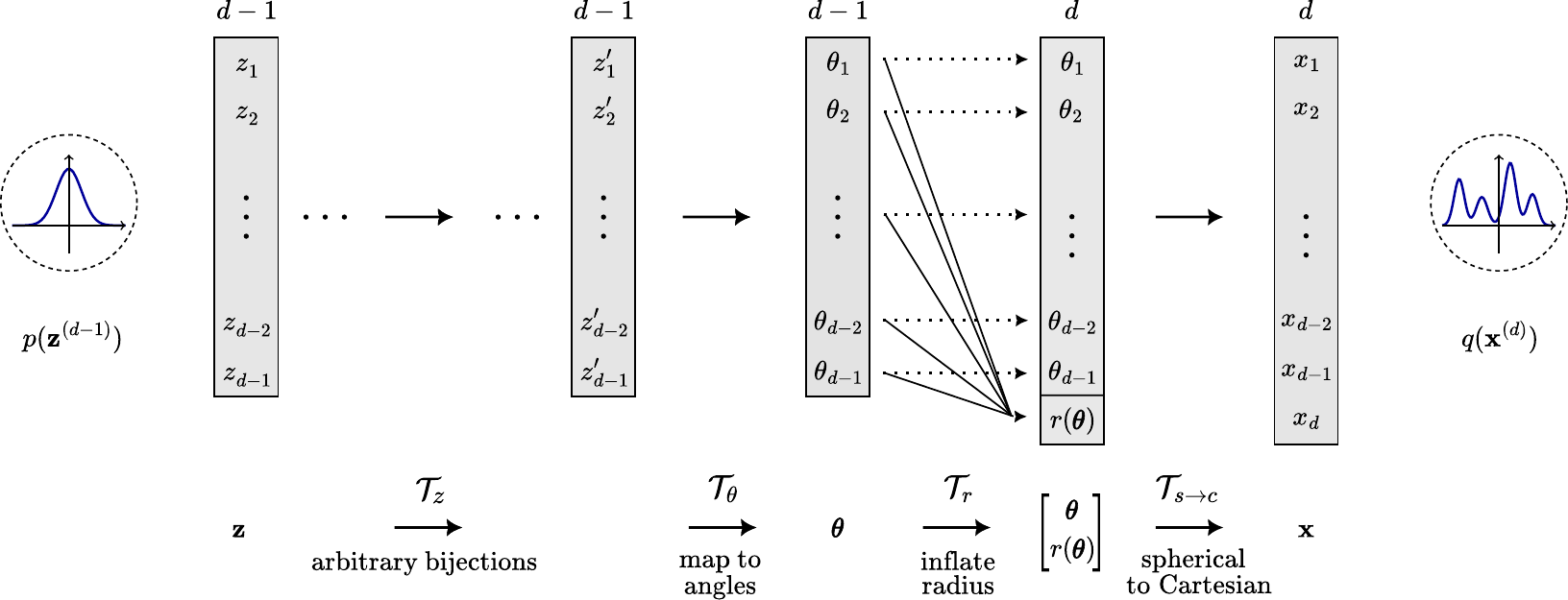}
    \caption{Architecture of the proposed injective flows for star-like manifolds, see Theorem~\ref{thm:spherical_flow}.}
    \label{fig:architecture_starlike_flow}
\end{figure}

\begin{restatable}{theorem}{sphericalflow}
Let $\T \coloneqq \Tsc \circ \T_{r} \circ \T_{\theta}$ as in Figure~\ref{fig:architecture_starlike_flow}, where
$\T_{\theta}: \bm{z}\in\R^{d-1} \mapsto \bm{\theta} \in U_{\theta}^{d-1}$ is any diffeomorphism to $d$-spherical angles,
$\T_{r}: \bm{\theta} \in U_{\theta}^{d-1} \mapsto [\bm{\theta}, r(\bm{\theta})] \in U_{\theta}^{d-1} \times \R_{>0}$ with $r(\cdot):\bm{\theta} \in U_{\theta}^{d-1} \mapsto r \in \R_{>0}$ being differentiable, and
$\Tsc:[\bm{\theta}, r(\bm{\theta})]^T \in U_{\theta}^{d-1}\times \R_{>0} \mapsto \bm{x} \in \R^{d}$ the $d$-spherical to Cartesian transformation (see Definition \ref{def:spherical_to_cartesian} in Appendix~\ref{sec:appendix_spherical_to_cartesian}).

Then, the Jacobian determinant of the full transformation $\T$ is equal to
\begin{equation}
    \det \big(J_{\T}\big) = \det \big(J_{\T_{\theta}}\big) \: \det \big(J_{\Tsc}\big) \: \| \big(J_{\Tsc}^T\big)^{-1} \;y  \|_2\:,
\label{eq:jacobian_determinant_starlike_manifolds}
\end{equation}
where $y \coloneqq \big[-\nabla_{\bm{\theta}} r(\bm{\theta}), 1\big] ^T$.
Relevantly, $\det J_{\T}$ can be computed exactly and efficiently in $O(d^2)$.
\label{thm:spherical_flow}
\end{restatable}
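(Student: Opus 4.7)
By the chain rule $J_{\T} = J_{\Tsc}\,J_{\T_r}\,J_{\T_\theta}$. Since $J_{\T_\theta}$ is square and invertible, the identity $\det((MK)^T(MK)) = (\det K)^2 \det(M^T M)$ pulls the factor $|\det J_{\T_\theta}|$ out of the volume element $\sqrt{\det(J_\T^T J_\T)}$. The problem thus reduces to proving
\[
\sqrt{\det\!\big((J_{\Tsc} J_{\T_r})^T (J_{\Tsc} J_{\T_r})\big)} \;=\; |\det J_{\Tsc}|\cdot\|(J_{\Tsc}^T)^{-1} y\|_2.
\]

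Writing $J_{\T_r}$ in block form with top block $I_{d-1}$ and bottom row $(\nabla_\theta r)^T$, a direct calculation gives $J_{\T_r}^T y = 0$, so $y$ spans $\operatorname{col}(J_{\T_r})^\perp$; hence $n \coloneqq (J_{\Tsc}^T)^{-1} y$ spans $\operatorname{col}(J_{\Tsc} J_{\T_r})^\perp$ because $(J_{\Tsc}J_{\T_r})^T n = J_{\T_r}^T y = 0$. I would then invoke the standard augmentation identity: for any $M \in \R^{d\times(d-1)}$ of full column rank and any nonzero $n \perp \operatorname{col}(M)$, the Gram matrix of $[M\mid n]$ is block diagonal, giving $\sqrt{\det(M^T M)} = |\det[M\mid n]|/\|n\|_2$. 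Applied with $M = J_{\Tsc}J_{\T_r}$ and $n = (J_{\Tsc}^T)^{-1} y$, the claim collapses to the single identity
\[
\big|\det[\,J_{\Tsc}J_{\T_r} \mid (J_{\Tsc}^T)^{-1} y\,]\big| \;=\; |\det J_{\Tsc}|\cdot\|(J_{\Tsc}^T)^{-1} y\|_2^{\,2}.
\]

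This last identity is the main obstacle, and I would tackle it via an orthogonal-decomposition argument. Using the splitting $\R^d = \operatorname{col}(J_{\T_r}) \oplus \operatorname{span}(y)$, decompose $(J_{\Tsc}^T J_{\Tsc})^{-1} y = J_{\T_r}\,a + b\,y$ with scalar $b = y^T (J_{\Tsc}^T J_{\Tsc})^{-1} y / \|y\|_2^2$. Multiplying on the left by $J_{\Tsc}$ factors the augmented matrix as $J_{\Tsc}\cdot[J_{\T_r}\mid y]\cdot U$ for an explicit upper-triangular $U$ with $\det U = b$, so its determinant equals $\det J_{\Tsc}\cdot\det[J_{\T_r}\mid y]\cdot b$. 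A one-line Schur-complement calculation on the block matrix $[J_{\T_r}\mid y]$ yields $\det[J_{\T_r}\mid y] = 1+\|\nabla_\theta r\|_2^2 = \|y\|_2^2$, which precisely cancels the $\|y\|_2^2$ in the denominator of $b$ and leaves $\det J_{\Tsc}\cdot y^T(J_{\Tsc}^T J_{\Tsc})^{-1} y = \det J_{\Tsc}\cdot\|(J_{\Tsc}^T)^{-1} y\|_2^2$, as required.

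Finally, for the $O(d^2)$ complexity claim: $\det J_{\Tsc}$ admits a closed-form product over the $d-1$ spherical angles (evaluable in $O(d)$), while the linear solve $J_{\Tsc}^T w = y$ can be carried out in $O(d^2)$ by exploiting the near-triangular sparsity pattern of the spherical-to-Cartesian Jacobian; these structural details I would defer to the appendix on $\Tsc$.
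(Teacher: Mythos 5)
Your proposal is correct, and it reaches Eq.~\eqref{eq:jacobian_determinant_starlike_manifolds} by a genuinely different route than the paper. Both proofs begin identically (factor out $\det J_{\T_\theta}$ via Remark~\ref{rmk:jacobian_determinant_injective}) and both hinge on the same vector $y=[-\nabla_{\bm\theta}r,1]^T$ spanning the left null space of $J_{\T_r}$. From there the paper zero-pads $J_{\T_r}$ to a square matrix, replaces the determinant by the pseudo-determinant, and works through adjugate-matrix machinery: $\Det(A)=\Tr(\adj(A))$ for rank-$(d-1)$ matrices (Lemma~\ref{thm:lemma2}), multiplicativity of the adjugate, and the rank-one representation $\adj(\Tilde J_{\T_r})=\frac{\Det(\Tilde J_{\T_r})}{y^Tx}xy^T$ (Lemma~\ref{thm:lemma1}, which rests on a characteristic-polynomial argument). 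You instead augment $J_{\Tsc}J_{\T_r}$ by the normal vector $n=(J_{\Tsc}^T)^{-1}y$, observe that the resulting Gram matrix is block diagonal so that $\sqrt{\det(M^TM)}=|\det[M\mid n]|/\|n\|_2$, and then evaluate the augmented determinant through the factorization $[J_{\Tsc}J_{\T_r}\mid n]=J_{\Tsc}[J_{\T_r}\mid y]\,U$ with $\det U=b=y^T(J_{\Tsc}^TJ_{\Tsc})^{-1}y/\|y\|_2^2$ and the Schur-complement computation $\det[J_{\T_r}\mid y]=\|y\|_2^2$. I checked the cancellation: you are left with $\det J_{\Tsc}\cdot y^T(J_{\Tsc}^TJ_{\Tsc})^{-1}y=\det J_{\Tsc}\,\|n\|_2^2$, which combined with the augmentation identity gives exactly the claimed formula. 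Your route is more elementary (no pseudo-determinants, adjugates, or auxiliary lemmas) and makes the geometric content explicit: $n$ is the unnormalized normal to the manifold in ambient coordinates and the extra factor $\|n\|_2$ is the usual ``co-area'' correction. The paper's route generalizes more directly to the algebraic setting of its Lemmas. Two minor omissions on your side: the paper's proof also establishes injectivity of $\T$ (needed for Eq.~\eqref{eq:manifold_flow_change_variable} to apply at all), and you should note that $(J_{\Tsc}^T)^{-1}$ exists only where $\Tsc$ is nonsingular, i.e.\ almost everywhere; neither affects the core computation.
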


\begin{proofsketch}
\textit{(i) Injectivity}
It is easy to see that $\T \coloneqq \Tsc \circ \T_{r} \circ \T_{\theta}$ is injective because $\T_{\theta}$ is bijective, $\T_{r}$ is injective and $\Tsc$ is also bijective.
The injectivity of $\T_{r}$ can be easily seen by noting that $\bm{\theta}\neq \bm{\theta}' \implies [\bm{\theta},r(\bm{\theta})]\neq [\bm{\theta}',r(\bm{\theta}')]$, independently of $r(\cdot)$.

\textit{(ii) Analytical Jacobian determinant}
Since $J_{\T}\in \R^{d\times d-1}$ is not square, we cannot use the usual property of bijections in Eq.~\eqref{eq:normalizing_flow_determinant}.
Instead, we use the result for injective functions in  Eq.~\eqref{eq:composition_injective_flow}:
\begin{equation*}
    \det \big(J_{\T}\big) = \det \big(J_{\T_{\theta}}) \det  \big(J_{\Tsc \circ \T_r}\big) =  \det \big(J_{\T_{\theta}}\big) \sqrt{ \det\Big( \big(J_{\Tsc} J_{\T_r} \big)^T \big(J_{\Tsc} J_{\T_r} \big)  \Big)}\;.
\end{equation*}
where $J_{\T_r} \in \mathbb{R}^{d \times d-1}$ and $J_{\Tsc} \in \mathbb{R}^{d \times d}$.
Note that, as we show in Appendix~\ref{sec:appendix_spherical_flow} Remark~\ref{rmk:jacobian_determinant_injective}, we can factor out only the Jacobian determinant of $\T_{\theta}$, i.e. the bijection that precedes the dimensional inflation step with $\T_{r}$.
The term $\det J_{\T_{\theta}}$ is the standard Jacobian determinant for bijective layers and can be computed efficiently.
We are then left to compute $\det  \big(J_{\Tsc \circ \T_r}\big)$.
To do so we consider the matrix $\Tilde{J}_{\T_r}\coloneqq [ J_{\T_r} \; \bm{0}_{d \times 1}] \in \mathbb{R}^{d\times d}$ and substitute the determinant with the pseudo-determinant:
\begin{equation*}
    \det  \big(J_{\Tsc \circ \T_r}\big) = \sqrt{ \det \Big( J_{\T_r}^T J^* J_{\T_r} \Big)}
    = \sqrt{ \Det \Big( \Tilde{J}_{\T_r}^T J^* \Tilde{J}_{\T_r} \Big)}\;,
\end{equation*}
where $J^*\coloneqq  J_{\Tsc} ^T  J_{\Tsc} \in \mathbb{R}^{d \times d}$.
With $\Det$ we denote the pseudo-determinant, which is defined as the product of all non-zero eigenvalues.
The rest of the proof is based on the key observation that $\Tilde{J}_{\T_{r}}$ has rank $d-1$ or, equivalently, that its null space is one-dimensional.
As a result, we can use Lemma~\ref{thm:lemma2} (see Appendix~\ref{sec:appendix_spherical_flow}) and re-write the pseudo-determinant as trace of the adjugate matrix :
\begin{equation*}
    \Det \Big(\Tilde{J}_{\T_r}^T J^* \Tilde{J}_{\T_r}\Big) =\Tr \Big(\adj \big(\Tilde{J}_{\T_r}^T J^* \Tilde{J}_{\T_r}\big)\Big) = \det \big(J^*)  \Tr \Big(\adj \big(\Tilde{J}_{\T_r}\big) \big(J^*\big)^{-1} \adj \big(\Tilde{J}_{\T_r}^T\big) \Big)\;.
\end{equation*}
where $\adj(\cdot)$ is defined as the transpose of the cofactor matrix.
If $A$ is invertible (as for $J^* = J_{\Tsc}^T J_{\Tsc}$), $\adj (A) = \det (A) A^{-1}$.
Since $\Tilde{J}_{\T_r}^T$ has rank $d-1$, Lemma~\ref{thm:lemma1} holds (see Appendix~\ref{sec:appendix_spherical_flow}):
\begin{equation*}
    \adj (\Tilde{J}_{\T_r}) =\frac{\Det (\Tilde{J}_{\T_r})}{y^T x} x y^T \:,
\end{equation*}
where $x=[\bm{0},1]^T$ and $y \coloneqq \big[-\nabla_{\bm{\theta}} r(\bm{\theta}), 1\big] ^T$.
Finally, by substitution we get:
\begin{equation*}
    \Det \Big(\Tilde{J}_{\T_r}^T J^* \Tilde{J}_{\T_r}\Big)
    = \det \big(J^*\big) \frac{\Det(\Tilde{J}_{\T_r})^2}{(y^T x)^2} \Tr \Big( x y^T \big(J^*\big)^{-1} y x^T \Big) 
    = \det \big(J_{\Tsc}\big)^2 \| \big(J_{\Tsc}^T\big)^{-1} \;y  \|_2^2
\end{equation*}
\textit{(iii) Complexity} We can now analyze the time complexity required to evaluate Eq.~\eqref{eq:jacobian_determinant_starlike_manifolds}.
The term $\det J_{\T_{\theta}}$ is the standard Jacobian determinant for bijective layers, which can be computed in $O(d)$.
The Jacobian determinant for spherical to Cartesian coordinates $\det J_{s \rightarrow c}$ is known in closed form~\citep{muleshkov2016easyproof} and can be evaluated in $O(d)$.
Therefore, we only need to show that also $w = \big(J_{\Tsc}^T\big)^{-1} \;y$ can be computed efficiently.
Solving the full linear system would require a complexity of $O(d^3)$.
However, $J_{\Tsc}^T$ is almost-triangular (see Appendix~\ref{sec:appendix_spherical_flow}, Eq.~\eqref{eq:almost_triangular_jacobian}) and we can make it triangular with one step of Gaussian elimination, which requires $O(d^2)$.
The resulting triangular system can be solved in $O(d^2)$.
Finally, note that we can compute $J_{\Tsc}$ very efficiently and analytically (see Appendix~\ref{sec:appendix_spherical_flow}, Eq.~\eqref{eq:analytical_jacobian_spherical_cartesian}), without requiring autograd computations.
\end{proofsketch}

\subsection{Further details and limitations}\label{sec:limitations}
\paragraph{Variational inference vs maximum likelihood}
In this paper we focus on variational inference settings (without observations), and not on the maximum likelihood setting (with observations), which is the main focus of most injective flows.
Our motivation is two-fold. 
First, our method is designed for star-like manifolds (with codimension 1), which are more useful in variational inference settings. 
In maximum likelihood applications, the underlying manifolds are often assumed to much lower dimensional. 
Second, while in maximum likelihood settings the approximate Jacobian work well in practice, in variational inference settings the exact Jacobian determinant is crucial to learn the correct target distribution. We show this empirically, already in very simple cases.


\paragraph{Implementation details}
The proposed approach is straightforward to implement and only requires to adapt the function $r(\bm{\theta})$ to the star-like manifold under study.
This means that we should have access to the parametrization $r(\bm{\theta})$, which is often trivial (see e.g. Appendix~\ref{sec:appendix_spherical_flow} Eq.~\eqref{eq:lp_norm_parametrization} and Eq.~\eqref{eq:simplex_parametrization}).
Our method is also versatile as the transformation $\mathcal{T}_{z}$ can be implemented with any bijective layers of choice.
Lastly, we easily avoid numerical instabilities arising from singular points in $\Tsc$ by offsetting the critical angles with a small epsilon. 

\paragraph{Limitations}
The main limitation of the proposed method is that it cannot be easily generalized to manifolds with intrinsic dimensionality lower than $d-1$.
In particular, the proof of Theorem~\eqref{thm:spherical_flow} relies on the fact that the zero-padded Jacobian matrix has rank $d-1$. 
Secondly, the expressivity of the proposed method also depends on the flexibility of the bijective layers.
Despite state-of-the-art bijective layers being extremely expressive \citep{perugachi2021invertible_densenet}, \citet{liao2021jacobian_determinant} showed that the number of modes that can be modeled remains limited.

\section{Related work}
\label{sec:related_work}
\paragraph{Normalizing Flows} Normalizing Flows (NFs) consist of a simple base distribution that is transformed into a more complicated one through bijective transformations.
One can show that such a construction allows us to approximate any well-behaved distribution~\citep{papamakarios2021normalizing_flows}.
In practice, the bijective transformations are implemented with neural networks that show a trade-off between expressiveness and computational complexity.
However, recently developed bijective layers provide very efficient transformations that satisfy the universality property \citep{huang2018neural_flows, durkan2019neural_spline, jaini2019sumofsquares_flow}.
For a comprehensive review of different bijective layers and for a discussion about applications we refer to \citet{papamakarios2021normalizing_flows} and \citet{kobyzev2021normalizing_flows}.

\paragraph{Variational Inference with NFs}
NFs are popular in two distinct applications: (i) as generative models trained on data or (ii) as powerful density estimators in variational inference settings.
In the first case, we are given some samples and NFs are trained by maximum likelihood (forward KL divergence) to approximate the data generating distribution and to later generate new samples~\citep{dinh2017real_nvp, papamakarios2018masked}.
In the variational inference setting, no samples are available and the target distribution is typically known only up to a normalization constant.
NFs are thus trained by reverse KL divergence and, once trained, they allow for both sampling and evaluation of the (approximate) target distribution~\citep{rezende2016vi_with_flows, kingma2017vi_with_inverse_flows}.
There is good evidence that a more faithful posterior approximation leads to an improved performance for variational inference tasks~\citep{rezende2016vi_with_flows}, which is one reason that makes normalizing flows attractive.
This setting is the focus of our work and is particularly relevant for Bayesian inference~\citep{louizos2017multiplicative_flows_vi}, where the goal is to learn and sample from the posterior distribution given the (unnormalized) product of the prior and likelihood.
In such settings NFs have also proven to be an attractive alternative to MCMC samplers~\citep{negri2023conditional}.   

\paragraph{Injective flows on manifolds}
The computational challenge of injective flows is the evaluation of the Jacobian determinant in Eq.~\eqref{eq:manifold_flow_change_variable}.
In the literature, exact and efficient computation of the Jacobian determinant has been shown only for trivial manifolds like spheres and tori~\citep{gemici2016flows_riemannian_manifolds, rezende2020flows_tori_spheres} or for very restrictive transformations~\citep{ross2021conformal_flows}.
In particular, the latter focuses only on maps for which $\mathcal{J}^T\mathcal{J}\propto\mathbb{I}$, making the determinant trivial.
These are however the only cases where the modeled density can be computed exactly and efficiently during training.
This has a significant impact on variational inference settings where the quality of the approximation influences the exploration of the distribution domain.
It is, in contrast, less relevant in data driven settings trained by maximum likelihood. 
Some early work ignored the determinant term altogether~\citep{brehmer2020mflow}, which was shown to have detrimental effects already in simple low-dimensional settings~\citep{caterini2021rectangular_flows}.
Current work is focused instead on finding some tractable approximation to the Jacobian determinant.
The most common one is to employ the Hutchinson’s trace estimator \citep{mathieu2020riemannian_continuous_flows, caterini2021rectangular_flows, flouris2023canonical_flows}, which is characterized by high variance and is biased if used to estimate the log-determinant of the Jacobian~\citep{kumar2020relaxed_injective_flows}.
State-of-the-art works employ surrogate log-likelihood losses and still approximate the Jacobian determinant  \citep{sorrenson2024injective_flows}.

In contrast, we are the first to propose exact and computationally efficient injective flows for a wide class of manifolds, namely star-like manifolds.

\section{Applications}
\label{sec:experiments}
In this section we showcase some interesting applications of the proposed injective flows.
In particular, we overcome some limitations of current work with exact density estimation, which are applicable only to trivial manifolds or for very restrictive transformations.
In Section~\ref{sec:efficacy_efficiency} we show empirically that the proposed approach provides a significant speedup compared to the explicit computation of the Jacobian determinant.
We also show that in variational inference the exact Jacobian determinant is crucial for training while the approximation commonly employed in injective flows results in poor reconstruction.
In Section~\ref{sec:3D_distributions} we illustrate how the proposed approach learns distributions on simple 3D manifolds.
In Section~\ref{sec:objective_bayes} we use injective flows to define a novel Objective Bayes approach to penalized likelihood problems.
Lastly, in Section~\ref{sec:bayesian_mixture} we introduce a general framework for variational inference in probabilistic mixing models.

\subsection{Effectiveness and efficiency of the proposed method}\label{sec:efficacy_efficiency}
We showed that for star-like manifolds we can efficiently compute the Jacobian determinant in Eq.~\eqref{eq:manifold_flow_change_variable} and we argued that exact evaluation is crucial for variational inference.
We now verify both statements empirically.
Firstly, we compare the runtime associated with computing the Jacobian determinant via the explicit formula in Eq.~\eqref{eq:manifold_flow_change_variable} with our approach in Eq.~\eqref{eq:jacobian_determinant_starlike_manifolds}.
In particular, we use a very simple manifold, the hypersphere in $d$ dimensions, and measure runtime over 20 repetitions of Jacobian determinant computation.
The results in Figure~\ref{fig:rebuttal_comparison}a show that our method provides a significant speedup compared to the explicit computation of the Jacobian.
By fitting a linear function to the log-runtime and log-dimension we get that the explicit computation is approximately cubic, $O(d^{2.96})$, while our proposed approach is approximately quadratic, $O(d^{1.81})$.
Secondly, we show that training with the exact Jacobian determinant is crucial for variational inference.
To do so we first train the proposed injective flow with the exact (and efficient) Jacobian in Eq.~\eqref{eq:jacobian_determinant_starlike_manifolds}.
Then, we train a second identical flow where the gradient of the Jacobian is approximated with the Hutchinson trace estimator with different number of Gaussian samples $n=1,10,50,100$. This approach is common in most injective flow papers~\citep{caterini2021rectangular_flows, flouris2023canonical_flows}. 
We use the implementation of state-of-the-art work~\citep{sorrenson2024injective_flows}.
The task is to learn the uniform distribution on a lp-(pseudo) norm ball with $p=0.5$.
In Figure ~\ref{fig:rebuttal_comparison} we compare the ground truth (log-) density with that obtained with the two models. Note that at test time the model trained with the approximate Jacobian is evaluated with the exact Jacobian. In the Appendix in Figure~\ref{fig:metric_comparison} we measure the similarity between samples of the two models and samples from the true distribution. In all cases the proposed model achieves significantly better results both in terms of density reconstruction and samples quality. Even at the cost of increased runtime (shown in Figure~\ref{fig:metric_comparison}), increasing the number of samples the approximate method fails to achieve similar performance to ours.


\begin{figure}[t]
    \centering
    \includegraphics[width=1\linewidth]{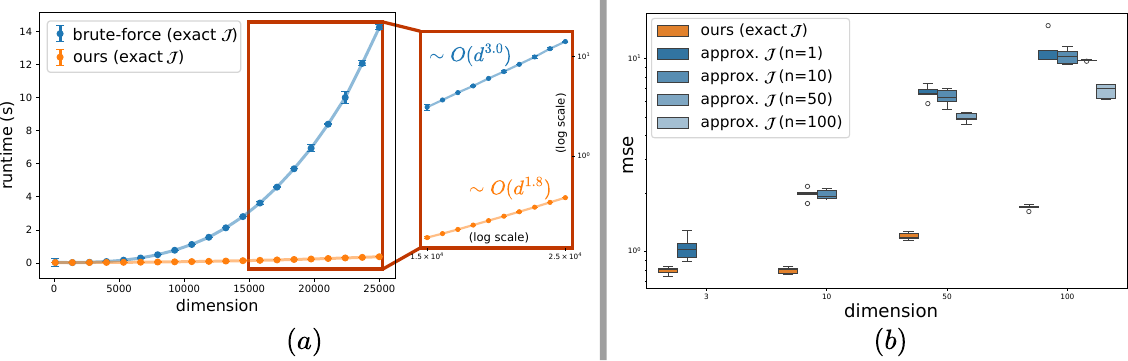}
    \caption{(a) Runtime comparison of Jacobian determinant computation with proposed approach in Eq~\eqref{eq:jacobian_determinant_starlike_manifolds} and with the brute-force computation Eq.~\eqref{eq:manifold_flow_change_variable}. (b) MSE of learned log-density for an injective flow trained to learn a uniform distribution on a $l_p$-(pseudo) norm ball with $p=0.5$. We compare the same model trained with our exact Jacobian and with the Hutchinson trace estimator with $n=1,10,50,100$ number of Gaussian samples. Note that the number of samples is upper bounded by the dimensionality of the problem and that at evaluation the exact Jacobian is used.
    \label{fig:rebuttal_comparison}}
\end{figure}

\subsection{Illustrative distributions in 3D}\label{sec:3D_distributions}
We first illustrate the proposed model on a simple 3D setting.
In particular, we train the injective flow in a variational inference setting where the target distribution is known only up to a constant and no samples are given. 
Note that this is much more challenging than the usual maximum likelihood settings where flows are trained on data.
In particular, it is not trivial to learn densities with many modes.
To showcase the effectiveness of the proposed method we show the modeled density for increasingly difficult targets: (i) von Mises-Fisher distribution ($\kappa = 5$), (ii) mixture of 50 von Mises-Fisher distributions arranged on a spiral ($\kappa = 50$) and (iii) sinusoidal density ($\log \rho(\theta, \phi) = \sin (4 \theta) \sin (4 \phi)$), where $\kappa$ is the concentration parameter and $\theta$ and $\phi$ are the polar and azimuthal angles, respectively.
The sinusoidal density (iii) is defined on a deformed sphere.
In Figure~\ref{fig:distributions_on_sphere} we show that the modeled densities perfectly match the ground truth in all cases.
Furthermore, we compare the modeled density with the ground truth over 10'000 samples and obtained an accurate reconstruction: $\textrm{MSE}=0.013$ (a), $0.011$ (b).
As the normalized density on the deformed sphere is not given, we only provide a qualitative visualization, which shows an accurate reconstruction.
\begin{figure}[h!]
    \centering
    \includegraphics[width=1\linewidth]{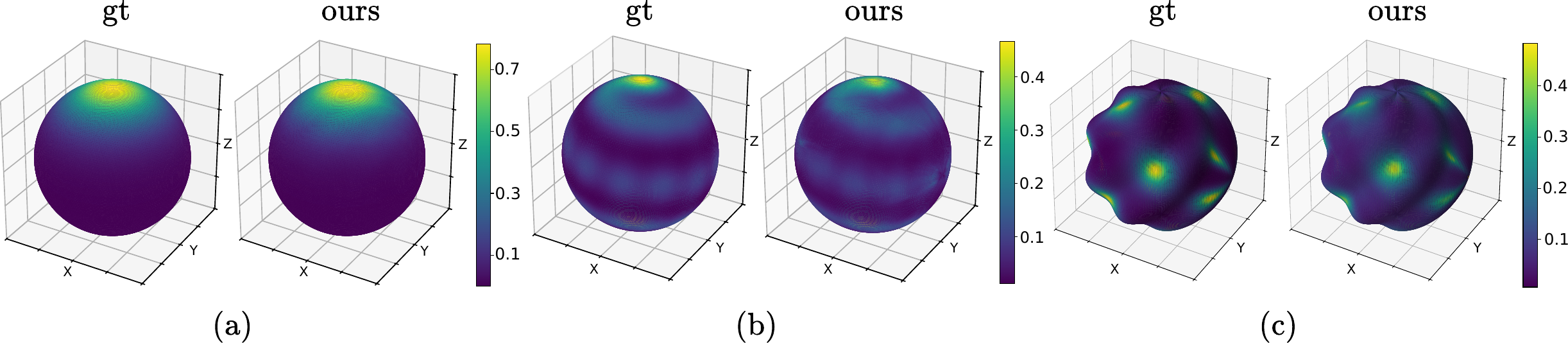}
    \caption{Learned (``ours'') and ground truth (``gt'') density for proposed injective flow trained via reverse KL divergence (no samples). Only the (unnormalized) target is given: (a) von Mises-Fisher ($\kappa = 5$), (b) mixture of 50 von Mises-Fisher ($\kappa = 50$) and (c) sinusoidal density on deformed sphere}.
    \label{fig:distributions_on_sphere}
\end{figure}
\subsection{Objective Bayesian approach to penalized likelihood}\label{sec:objective_bayes}
\paragraph{Objective and subjective Bayes}
Bayesian inference is a powerful statistical method that requires a likelihood term, which explains the observed data, and a prior, which quantifies our initial belief.
However, in many cases we might not have enough problem-specific knowledge to specify an informative subjective prior.
This has led to the development of \emph{objective priors}, which are designed to be minimally informative.
Some objective priors include Jeffreys rule~\citep{jeffreys1961theory}, reference priors~\citep{bernardo1979reference}, and maximum entropy priors~\citep{jaynes2003probability}.
Given the vast literature on objective priors~\citep{berger2006objective_bayes}, in this work we do not intend to discuss whether objective priors are preferable or not.
Instead, we provide a new framework to define objective priors in settings where only subjective ones have been explored so far.
Specifically, we consider Bayesian penalized likelihood problems and show that level sets of the penalty define star-like manifolds.
Then, we implicitly define the objective prior as the uniform distribution on such manifolds.
This choice aligns with the literature on objective priors for distributions on surfaces since a uniform distribution assigns equal mass to equal volume~\citep{kass1989geometry_inference, kass1996selection_prior}.

\paragraph{Objective Bayes for penalized likelihood models}
Let $\bm{y}\sim \bm{X}\bm{\beta}+\bm{\epsilon}$ with $\bm{\epsilon}\sim\mathcal{N} (\bm{0}, \sigma^2 \mathbb{I}_n)$, where $\bm{X}\in\R^{n \times d}$ is the data matrix, $\bm{y}\in \R^n$ the targets and $\bm{\beta}\in\R^d$ the regression coefficients.
We then optimize the mean-squared error $\|\bm{y} - \bm{X}\bm{\beta}\|_2^2$ subject to the (pseudo-) norm penalties $\|\bm{\beta}\|_p^p$ with $p>0$, which encourages sparsity for $p\leq 1$.
Note that for $p=1$ we recover the LASSO penalty~\citep{tibshirani1996lasso} and for $p=2$ the Ridge penalty.
\citet{tibshirani1996lasso} noted that we can interpret such penalized likelihood in a Bayesian way with a Gaussian likelihood and a suitable prior.
\citet{park2008bayesian} showed that with an independent Laplace prior the Maximum a Posteriori (MAP) of the posterior coincides with the frequentist solution.
The above reasoning can be extended to any $l_p$ (pseudo-) norm $\|\cdot\|_p$ by using a generalized Gaussian prior on $\bm{\beta}$:
\begin{equation}
    \argmin\limits_{\bm{\beta}\in \R^d} \tfrac{1}{2\sigma^2} \|\bm{y} - \bm{X}\bm{\beta}\|_2^2 +\lambda \|\bm{\beta}\|_p^p = \argmax\limits_{\bm{\beta}\in\R^d}\underbrace{\mathcal{N}(\bm{X}\bm{\beta}, \sigma^2\mathbb{I}_n)}_{p(\bm{y}|\bm{X},\bm{\beta})} \underbrace{\textstyle \prod_i\exp\{-\lambda |\beta_i|^p)}_{p(\bm{\beta}|\lambda)} = \bm{\beta}^*\;.
    \label{eq:map_penalized_regression}
\end{equation}
\begin{wrapfigure}{r}{0.4\textwidth}
\centering
  \includegraphics[width=0.4\textwidth]{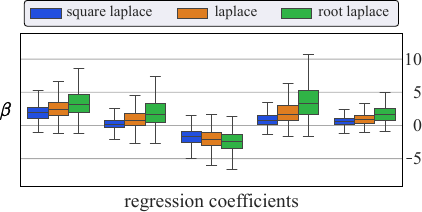}
  \caption{$95\%$ posterior C.I. for 3 subjective priors with the same MAP. The choice of prior affects the posterior.} 
  \label{fig:posterior_monotonic_priors}
\end{wrapfigure}
However, the generalized Gaussian is not the only prior for which Eq.~\eqref{eq:map_penalized_regression} holds.
Any monotonic transformation $h$ of $p(\bm{\beta}|\lambda))$ results 
in the same MAP solution $\bm{\beta}^*$ (see Figure~\ref{fig:map_solution} in the Appendix). 
Therefore, the choice of $h(p(\bm{\beta}|\lambda))$, and hence of the prior, is subjective and, crucially, it influences the posterior.
We show this empirically on toy data with the Laplace prior and two monotonic transformations: the square (``square laplace'') and square root (``root laplace'').
In Figure~\ref{fig:posterior_monotonic_priors} we can clearly see that the posterior is influenced by the choice of the subjective prior, which is undesirable in the absence of specific assumptions.
In contrast, we circumvent the choice of a subjective prior and propose a general framework for objective priors in penalized likelihood methods.


\paragraph{Objective Bayesian penalized likelihood with injective flows} 
With subjective priors the penalty $\|\bm{\beta}\|_p^p$ in Eq.~\eqref{eq:map_penalized_regression} is enforced as a soft constraint controlled by $\lambda$ such that $\|\bm{\beta}\|_p \leq k(\lambda)$, for some $k(\lambda)$.
Our idea is to enforce the norm penalty as a hard constraint by defining the posterior on the manifold $\|\bm{\beta}\|_p = k$ by construction.
This way we do not require to \emph{explicitly} specify a subjective prior and we are \emph{implicitly} assuming a uniform prior on the manifold $\|\bm{\beta}\|_p = k$.
In summary:
\begin{equation*}
\begin{aligned}
    &\textit{Objective Bayes}\\
    &p(\bm{y}|\bm{X},\bm{\beta})=\mathcal{N}(\bm{X}\bm{\beta}, \sigma^2\mathbb{I}_n) \\
    &\textrm{posterior on manifold:}\quad \|\bm{\beta}\|_p=k
\end{aligned}
\qquad\longleftrightarrow\qquad
\begin{aligned}
    &\textit{Subjective Bayes}\\
    &p(\bm{y}|\bm{X},\bm{\beta})=\mathcal{N}(\bm{X}\bm{\beta}, \sigma^2\mathbb{I}_n) \\
    &\textrm{prior:}\quad  p(\bm{\beta}|\lambda)\propto \textstyle\prod_i\exp\{-\lambda |\beta_i|^p)
\end{aligned}
\end{equation*}

The equality $\|\bm{\beta}\|_p=k$ induces a star-like manifold which we can parametrize with a suitable radius function; see Appendix~\ref{sec:appendix_implementation_details} Eq.~\eqref{eq:lp_norm_parametrization} for the explicit parametrization.
Therefore, with our framework we can define the (approximate) posterior $q_\theta(\bm{\beta})$ to be constrained on $\|\bm{\beta}\|_p=k$ by construction.

\begin{wrapfigure}{r}{0.50\textwidth}
\centering
  \includegraphics[width=0.47\textwidth]{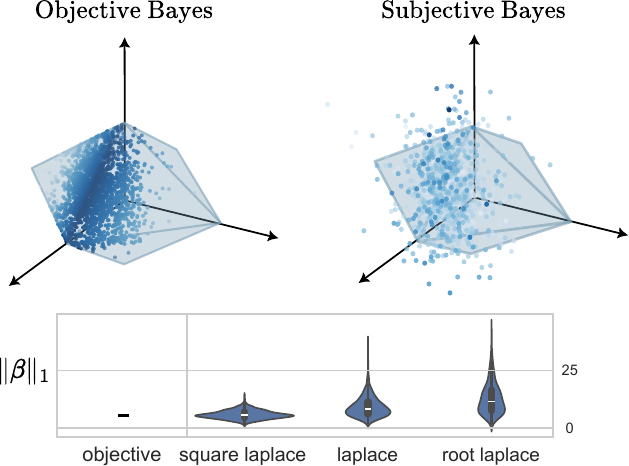}
  \caption{\textit{Above:} with the objective prior posterior samples lie on the manifold. \textit{Below:} the norm of posterior samples depends on the subjective prior.} 
  \label{fig:objective_subjective_bayes}
\end{wrapfigure}
We now illustrate the differences between the subjective and objective approaches with synthetic data.
We use a NF to approximate the posterior $\mathcal{N}(\bm{X}\bm{\beta}\sigma^2\mathbb{I}_n) p(\bm{\beta}|\lambda)$ with the ``square laplace'' subjective prior $p(\bm{\beta}|\lambda)$. We choose $\lambda$ such that the MAP has a specific norm $\|\bm{\beta}^*\|_1 = k$.
Furthermore, we use an injective flow defined on $\|\bm{\beta}\|_1=k$ to approximate the target $\mathcal{N}(\bm{X}\bm{\beta}, \sigma^2\mathbb{I}_n)$ (implicitly uniform prior).
In both cases training is performed by minimizing the reverse KL divergence (see Appendix~\ref{sec:appendix_applications}, Eq.\eqref{eq:variational_inference}).
Figure~\ref{fig:objective_subjective_bayes} shows a crucial difference: samples from the objective posterior lie exactly on the manifold while the subjective posterior is scattered.
The bottom panel of Figure~\ref{fig:objective_subjective_bayes} shows the distribution of the sample norms varying significantly with the choice of the subjective prior, which agrees with Figure~\ref{fig:posterior_monotonic_priors}.
We include more implementation details in Appendix~\ref{sec:appendix_applications}.
Lastly, note that we do not claim that the proposed prior is \textit{better} in general, but rather preferable in cases where a less informative prior is desired.


\subsection{Variational Inference for probabilistic mixing models}\label{sec:bayesian_mixture}
\paragraph{Probabilistic mixing models}
Mixing models are used to study the relative contribution of sources to an observed mixture and are particularly relevant in ecology, geoscience, meteorology and many other fields ~\citep{phillips2012isotope_mixing,Stock2018MixSIAR, jiskra2021mercury_isotopes}.
Formally, the aim is to reconstruct the mixture $\bm{\pi}$ of sources from which the observations $\mathcal{D}$ were generated, with $\bm{\pi}$ being defined on the probabilistic simplex $\mathcal{C}^d \coloneqq \{\bm{\pi}\in\R^d: \pi_i \geq 0 \;, \; \sum_{i=1}^k \pi_i=1 \}$.
In the most general Bayesian formulation, we require a prior $p(\bm{\pi})$ and some likelihood $p(\mathcal{D}|\bm{\pi})$ and the challenge is then to define the posterior $p(\bm{\pi}|\mathcal{D})\propto  p(\mathcal{D}|\bm{\pi}) p(\bm{\pi})$ on the probabilistic simplex $\mathcal{C}^d$.
Most approaches rely on the Dirichlet distribution, which is defined on $\mathcal{C}^d$ by construction: $\mathrm{Dir}(\bm{\pi}) \propto \prod_i \pi_i^{\alpha_i-1}$ with $\alpha_i>0$.
Since the posterior can be obtained in closed form only with a multinomial likelihood, most approaches rely on sophisticated MCMC samplers~\citep{Stock2018MixSIAR}.
As a more flexible alternative to MCMC methods, we present a general variational inference framework where $p(\bm{\pi}|\mathcal{D})$ is always defined on $\mathcal{C}^d$, leaving complete freedom in the choice of prior and likelihood.

\paragraph{Injective flows vs MCMC in Bayesian mixing models}
It is easy to see that the probabilistic simplex $\mathcal{C}^d$ is a star-like manifold, see Appendix~\ref{sec:appendix_implementation_details} Eq.~\eqref{eq:simplex_parametrization} for the explicit parametrization.
Therefore, with the our framework we can define an injective flow $q_{\bm{\theta}}(\bm{\pi})$ on $\mathcal{C}^d$ by construction and train it to approximate the posterior $p(\bm{\pi}|\mathcal{D}) \propto  p(\mathcal{D}|\bm{\pi}) p(\bm{\pi})$.
In the simplest case when no prior is specified, we are implicitly assuming a uniform distribution on the simplex, i.e. a Dirichlet prior with $\alpha_i=1 \; \forall i$.
In the more general case, we can always plug in any combination of likelihood $p(\mathcal{D}|\bm{\pi})$ and prior $p(\bm{\pi})$, and the (approximate) posterior $q_{\bm{\theta}}(\bm{\pi})$ will always be defined on $\mathcal{C}^d$ by design. 
In contrast, with MCMC methods it is not trivial to guarantee posterior samples to be on $\mathcal{C}^d$, already for very simple likelihoods~\citep{altmann2014gaussian_simplex_review, baker2018sampling_simplex}.
We compare our method with an MCMC sampler in the conjugate case (Dirichlet prior and multinomial likelihood) such that we can compare with the true posterior, which is available in closed form.
We report the results in Appendix~\ref{sec:appendix_applications} in Figure~\ref{fig:mcmc_comparison}, together with a description of the MCMC sampler.
Results show that (i) the proposed injective flow correctly estimates the posterior distribution across all tested dimensions and (ii) it outperforms the MCMC sampler already in low dimensions. Interestingly, this experiment shows that our approach is able to learn very sparse solutions with many modes.

\paragraph{Application: Bayesian portfolio optimization}
We select a minimal Bayesian mixing model that already shows the advantages of our proposed method in terms of flexibly choosing likelihood and prior. 
One such setting is index replication in the context of portfolio optimization~\citep{markowitz2000mean_variance}.
A portfolio is defined as a set of $n$ stocks which are held proportionally to the mixture components $\bm{\pi}\in \R^n_{>0}$, such that $\sum_i \pi_i=1$.
Let $\bm{R}\in\R^{T\times n}$ be the returns over the time-steps $t=\{1,\ldots, T\}$  of the $n$ stocks.
We are interested in optimizing the portfolio weights $\bm{\pi}$ such that we replicate the reference index returns $\bm{\rho}\in\R^T$, while also incorporating investors personal preferences.
For instance, a sparse portfolio allows to reduce transaction costs arising from trading~\citep{sokolov2019strategic_asset}.
We formulate the problem in Bayesian fashion by specifying a Gaussian likelihood $p(\bm{\rho}|\bm{R},\bm{\pi}) = \mathcal{N}(\bm{R}\bm{\pi}, \sigma^2\mathbb{I}_n)$ and some sparsity-inducing prior $p(\bm{\pi})$.

With the proposed framework we can approximate the posterior $p(\bm{\pi}|\bm{R}, \bm{\rho}) \propto p(\bm{\rho}|\bm{R},\bm{\pi}) p(\bm{\pi})$ with an injective flow $q_{\bm{\theta}}(\bm{\pi})$ defined on $\mathcal{C}^n$ by design.
The flow $q_{\bm{\theta}}(\bm{\pi})$ is trained by minimizing the reverse KL divergence with the unnormalized target $p(\bm{\rho}|\bm{R},\bm{\pi}) p(\bm{\pi})$.
For the sake of illustration, we select a portfolio with 10 stocks over a period of 200 time steps from the dataset in~\citet{porfolio2024}. 
We define $q_{\bm{\theta}}(\bm{\pi})$ on the manifold and consider two priors: the uniform prior on the simplex and the Dirichlet distribution.
In Figure~\ref{fig:portfolio_optimization} we show the distribution of non-zero entries of the posterior samples for the uniform and Dirichlet distribution.
In particular, we consider the distribution and the sparsity patterns at 4 fixed values of the likelihood (one per plot).
Despite the likelihood being the same, the Dirichlet prior leads to a sparser solution with fewer non-zero entries.
This is also noticeable in the sparsity patterns of the posterior samples in the bottom panel.
In Appendix~\ref{sec:appendix_applications} in Figure~\ref{fig:cumulative_return} we also show the cumulative return and how it is affected by sparsity.
Overall, we showed how easily we can specify any likelihood and priors while constraining the posterior on the simplex.

\begin{figure}[t]
  \centering
  \includegraphics[width=1.0\linewidth]{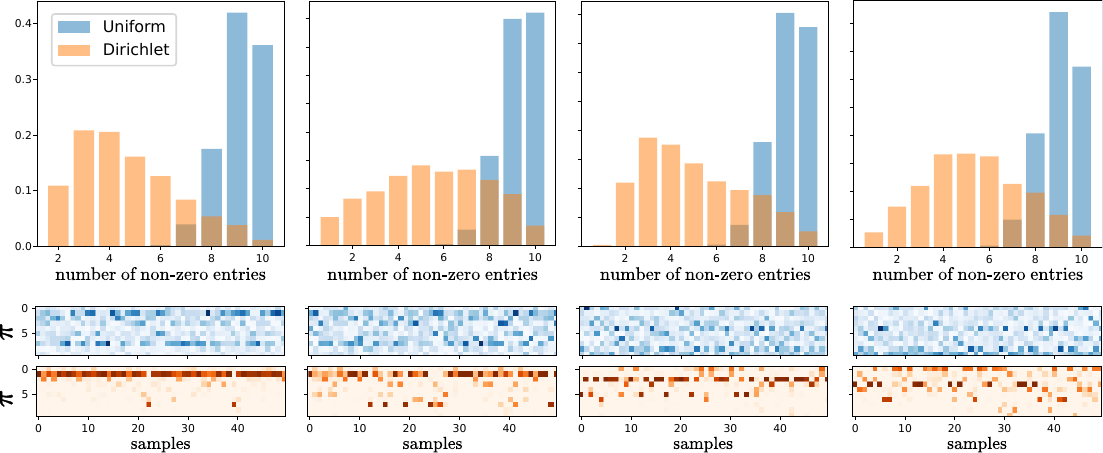}
  \caption{Trained injective flow on the simplex with target $p(\bm{\pi}|\mathcal{D}) \propto p(\mathcal{D}|\bm{\pi})$ (``uniform'' prior, implicitly) and $p(\bm{\pi}|\mathcal{D}) \propto p(\mathcal{D}|\bm{\pi}) \textrm{Dir}(\bm{\pi})$ (``Dirichlet'' prior).
  \textit{Above}: we compare the distribution of non-zero entries in posterior samples for 4 fixed likelihood values (so at fixed reconstruction).
  \textit{Below}: we compare sparsity patterns of samples $\bm{\pi}$ with uniform (blue) and Dirichlet (orange) prior.}
  \label{fig:portfolio_optimization}
\end{figure}

\section{Conclusions}
\label{sec:conclusion}
Previous work on injective flows on manifolds relies on approximations or lower bounds to circumvent the computation of the Jacobian determinant term.
In this work we showed how to exactly and efficiently compute the Jacobian determinant term for the general class of star-like manifolds.
We validate empirically the claimed computational advantage and we show that exact Jacobian computation is crucial for variational inference, already in very simple settings.
We then showed that the proposed flow allows for interesting applications that were not possible before.
First, with the proposed framework we introduced a novel Objective Bayes approach to penalized likelihood methods.
The idea is to circumvent the choice of a subjective prior by constraining the posterior on the manifold defined by level-sets of the prior.
Second, we introduced a general variational inference framework for modeling the posterior in probabilistic mixing models.
Overall, the proposed framework allows us to efficiently model distributions on arbitrary star-like manifolds and to flexibly specify any choice of prior and likelihood.

\newpage
\bibliographystyle{plainnat}
\bibliography{iclr2025_conference.bib}

\newpage
\appendix
\section{Appendix}
\label{sec:appendix}
The Appendix is organized in six parts.
In Subsection~\ref{sec:appendix_spherical_to_cartesian} we define the generalized spherical coordinate system and define the transformation from spherical to Cartesian coordinates.
In Subsection~\ref{sec:appendix_jacobian_injective} and~\ref{sec:appendix_auxiliary_theorems} we provide some auxiliary theorems and Lemmas that are used in the main proof.
In Subsection~\ref{sec:appendix_spherical_flow} we provide the full proof of Theorem~\ref{thm:spherical_flow}.
In Subsection~\ref{sec:appendix_implementation_details} we provide some details about the implementation of the proposed injective flows and we make some further comments about the associated computational cost.
Finally, in Subsection~\ref{sec:appendix_applications} we include further plots and implementation details about the experiments.

\subsection{Generalized spherical coordinates}\label{sec:appendix_spherical_to_cartesian}
\begin{definition}
We define the \emph{$d$-spherical coordinate system} as a generalization of the spherical coordinate system for $d$-dimensional Euclidean spaces.
Such coordinate system is defined with $d-1$ angles $\theta_1, \ldots, \theta_{d-1}$ and one radius $r\in\R_{>0}$, where $\theta_i\in[0,\pi]$ for $i<d-1$ and $\theta_{d-1}\in[0,2\pi]$. 
We further define a transformation $\Tsc : \bm{x}_s \mapsto \bm{x}_c$ that maps spherical coordinates $\bm{x}_s=[\theta_1, \ldots,\theta_{d-1}, r]^T$ to Cartesian coordinates $\bm{x}_c = [x_1, \ldots, x_d]^T$ as 
\begin{equation}
    \begin{cases}
        x_1 = r \cos \theta_1 \\
        x_2 = r \sin \theta_1 \cos \theta_2 \\
        \hspace{2mm}\vdots  \\
        x_{d-1} = r \sin \theta_1 \sin \theta_2 \cdots \sin \theta_{d-2} \cos \theta_{d-1} \\
        x_{d} = r \sin \theta_1 \sin \theta_2 \cdots \sin \theta_{d-2} \sin \theta_{d-1}
    \end{cases}
    \label{eq:spherical_to_cartesian}
\end{equation}
We denote with $U^{d-1}_\theta \times \R_{>0}$ the domain of definition for $d$-spherical coordinate system, where $U^{d-1}_\theta \coloneqq [0,\pi]^{d-2} \times [0, 2\pi] $.
\label{def:spherical_to_cartesian}
\end{definition}

\subsection{Jacobian determinant for arbitrary injective flows}\label{sec:appendix_jacobian_injective}
\begin{remark}
    Let $\T_m : \R^m \mapsto \R^m$ and $\T_d:\R^d \mapsto \R^d$ be arbitrary bijective transformation and let $\Tmd: \R^m \rightarrow \R^d$ be an injective transformation with $m<d$.
    The transformation $\T=\T_d \circ \Tmd \circ \T_m$ is also injective and its Jacobian determinant factorizes as 
    \begin{equation}
        \det J_{\T} = \det J_{\T_m} \sqrt{\det\Big( \big( J_{\T_d} J_{\Tmd}  \big)^T J_{\T_d} J_{\Tmd} \Big)}\;.
    \end{equation}
    \label{rmk:jacobian_determinant_injective}
\end{remark}
\begin{proof}
    The injectivity of $\T$ is trivial since it is by definition a composition of injective functions.
    Since $\T$ is injective, its Jacobian matrix $J_{\T}\in \R^{d\times m}$ is not squared and we cannot use the usual property of bijections in Eq.\eqref{eq:normalizing_flow_determinant}.
    Instead, we use the definition of Jacobian determinant for injective functions in Eq.~\eqref{eq:manifold_flow_change_variable}:
    \begin{equation}
            \det J_{\T} = \sqrt{\det\Big(J_{\T}^T J_{\T}\Big)} = \sqrt{\det\Big( \big(J_{\T_d} J_{\Tmd} J_{\T_m} \big)^T \big(J_{\T_d} J_{\Tmd} J_{\T_d} \big) \Big)}\;,
    \end{equation}
    where $J_m\in\R^{m \times m}$, $J_{\Tmd} \in \mathbb{R}^{d \times m}$ and $J_{\T_d} \in \mathbb{R}^{d \times d}$.
    We now show that we can factor out the Jacobian determinant of $\T_m$, i.e. the bijection that precedes the dimensional inflation step with $\Tmd$.
    To do so we use the property that for square matrices $A, B$ $\det (A B) = \det A \det B$ and that $\det A = \det A^T$:
    \begin{equation}
        \begin{aligned}
            \det J_{\T} &= \sqrt{\det\Big( J_{\T_m}^T J_{\Tmd}^T J_{\T_d}^T J_{\T_d} J_{\Tmd} J_{\T_m} \Big)}\\
            &= \sqrt{\det J_{\T_m}^T \det\Big( J_{\T_m}^T \: J_{\Tmd}^T J_{\T_d}^T \: J_{\T_d} J_{\Tmd} \Big) \det J_{\T_m}}\\
            &= \det J_{\T_m} \sqrt{ \det\Big( \big(J_{\T_d} J_{\Tmd} \big)^T \big(J_{\T_d} J_{\Tmd} \big)  \Big)}\\
            &= \det J_{\T_m} \det  J_{\T_d \circ \Tmd}\;.
        \end{aligned}
    \end{equation}
\end{proof}

\subsection{Auxiliary theorems: adjugate matrix and pseudo-determinant}\label{sec:appendix_auxiliary_theorems}

\begin{theorem} \citep{castillo2022AdjugateMatrix}
Let $A \in \R^{d\times d}$ and let $\lambda \in \R$ be an eigenvalue of A. Let $v, w \in \R^{d}$ be a right and a left eigenvector, respectively, of $A$ for $\lambda$. Then
\begin{equation}
    w^T v \adj (\lambda \mathbb{I}_d - A) = p'_A(\lambda) v w^T \;.
    \label{eq:adjugate_theorem}
\end{equation}
where $p'_A(\lambda)$ is the derivative of the characteristic polynomial $p_A(\lambda) = \det (\lambda \mathbb{I}_d - A)$. 
\label{thm:adjugate_matrix}
\end{theorem}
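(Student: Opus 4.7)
The plan is to exploit two classical ingredients: the fundamental identity $M \cdot \adj(M) = \adj(M) \cdot M = \det(M)\, \mathbb{I}_d$ applied to the singular matrix $M \coloneqq \lambda \mathbb{I}_d - A$, together with Jacobi's formula $\tfrac{d}{dt}\det(M(t)) = \Tr(\adj(M(t))\, \dot M(t))$ for the derivative of a determinant. Since $\lambda$ is an eigenvalue of $A$, we have $\det(M) = p_A(\lambda) = 0$, so both products $M\adj(M)$ and $\adj(M)M$ vanish; this forces $\adj(M)$ to inherit the rank-one structure $c\, v w^T$, and the trace together with Jacobi's formula will identify $c$.

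I would split the argument according to the multiplicity of $\lambda$. In the generic case where $\lambda$ is a simple eigenvalue, $\rank(M) = d-1$, so the standard rank formula for the adjugate gives $\rank(\adj(M)) = 1$. From $M \adj(M) = 0$, every column of $\adj(M)$ lies in $\ker(M) = \mathrm{span}(v)$; symmetrically $\adj(M) M = 0$ forces every row into $\mathrm{span}(w^T)$. Combining these two constraints yields $\adj(M) = c\, v w^T$ for some scalar $c$. Taking the trace gives $\Tr(\adj(M)) = c\, w^T v$, while Jacobi's formula applied to $M(\mu) = \mu \mathbb{I}_d - A$ (whose derivative in $\mu$ is $\mathbb{I}_d$) gives
\begin{equation*}
p'_A(\lambda) = \tfrac{d}{d\lambda}\det(\lambda \mathbb{I}_d - A) = \Tr\bigl(\adj(\lambda \mathbb{I}_d - A)\bigr).
\end{equation*}
Equating the two expressions produces $c\, w^T v = p'_A(\lambda)$; multiplying the relation $\adj(M) = c\, v w^T$ by $w^T v$ on the left then recovers Eq.~\eqref{eq:adjugate_theorem} exactly.

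The remaining obstacle is the degenerate case where $\lambda$ has algebraic multiplicity at least two, where the rank-one decomposition above can fail. Here I would argue that \emph{both} sides of Eq.~\eqref{eq:adjugate_theorem} vanish independently: the right-hand side because $p'_A(\lambda) = 0$ at a repeated root, and the left-hand side for one of two reasons. If the geometric multiplicity is also at least two then $\rank(M) \leq d-2$, which forces $\adj(M) = 0$ by the standard adjugate-rank dichotomy. If instead $\lambda$ is defective (geometric multiplicity $1$ but algebraic multiplicity $\geq 2$), then $\adj(M)$ may remain nonzero, but a short Jordan-block computation shows the right eigenvector $v$ (associated with the top of the chain) and the left eigenvector $w$ (associated with the bottom) are biorthogonal, so $w^T v = 0$ kills the prefactor on the left-hand side. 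The main delicate step is this final case analysis, in particular verifying $w^T v = 0$ in the defective case; once that is in hand, the identity is a direct consequence of the structural rank-one decomposition and Jacobi's formula.
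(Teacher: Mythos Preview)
The paper does not prove this theorem---it is quoted from \citet{castillo2022AdjugateMatrix} and invoked as a black box to derive the two subsequent lemmas. There is therefore no in-paper argument to compare against.

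Your proposal is correct and is essentially the standard route. The simple-eigenvalue case is clean: the adjugate--rank trichotomy pins $\adj(\lambda\mathbb{I}_d-A)$ to rank one, the vanishing of $M\adj(M)$ and $\adj(M)M$ forces its column and row spaces to align with $\mathrm{span}(v)$ and $\mathrm{span}(w^T)$ respectively, and Jacobi's formula $p_A'(\lambda)=\Tr\bigl(\adj(\lambda\mathbb{I}_d-A)\bigr)$ identifies the scalar. Your degenerate case is also sound: when the geometric multiplicity is at least two the adjugate vanishes outright; in the defective case (algebraic multiplicity $\geq 2$, geometric multiplicity $1$) the claim $w^Tv=0$ is the standard biorthogonality of left and right eigenvectors at a nontrivial Jordan block---in Jordan coordinates the right eigenvector is $e_1$ and the left eigenvector is $e_m^T$ for a block of size $m\geq 2$, so they pair to zero. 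One minor simplification worth noting: in the simple case you never actually divide by $w^Tv$; multiplying the identity $\adj(M)=c\,vw^T$ by the scalar $w^Tv$ and substituting $c\,w^Tv=p_A'(\lambda)$ yields Eq.~\eqref{eq:adjugate_theorem} directly, so no nondegeneracy assumption on $w^Tv$ is ever invoked in that branch.
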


\begin{lemma}
Consider the special case where $A \in \R^{d \times d}$ and $\rank A = d-1$ or, in other words, the  nullspace of $A$ is one dimensional. Then
    \begin{equation}
    \adj (A) =\frac{\Det (A)}{w^T v} v w^T\;,
    \label{eq:theorem_lambda_0}
\end{equation}
where $\Det$ is the pseudo-determinant.
\label{thm:lemma1}
\end{lemma}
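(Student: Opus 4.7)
The plan is to specialize Theorem~\ref{thm:adjugate_matrix} to the eigenvalue $\lambda = 0$. Because $\rank A = d-1$, zero is an eigenvalue of both $A$ and $A^T$ with geometric multiplicity one; let $v$ and $w$ span these two nullspaces, so they serve as right and left eigenvectors of $A$ for $\lambda = 0$. The choice $\lambda = 0$ is natural because the term $\adj(\lambda \mathbb{I}_d - A)$ in Eq.~\eqref{eq:adjugate_theorem} reduces, up to a sign, to $\adj(A)$ itself.

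First I would evaluate the left-hand side of \eqref{eq:adjugate_theorem} at $\lambda=0$ and apply the standard homogeneity identity $\adj(cA)=c^{d-1}\adj(A)$ to obtain $w^T v\,\adj(-A) = (-1)^{d-1}\, w^T v\,\adj(A)$. Next I would compute $p_A'(0)$ by factoring the characteristic polynomial. Under the implicit nondegeneracy assumption that $0$ is a \emph{simple} root of $p_A$---required for $w^T v \neq 0$, which rules out a Jordan block at $0$---we may write $p_A(\lambda) = \lambda \prod_{i=2}^{d}(\lambda - \lambda_i)$, where $\lambda_2,\dots,\lambda_d$ are the nonzero eigenvalues counted with algebraic multiplicity. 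Differentiating and evaluating at $\lambda = 0$ yields $p_A'(0) = \prod_{i=2}^d (-\lambda_i) = (-1)^{d-1}\Det(A)$, directly from the definition of the pseudo-determinant as the product of the nonzero eigenvalues.

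Substituting both simplifications into \eqref{eq:adjugate_theorem} and cancelling the common sign $(-1)^{d-1}$ gives $w^T v\,\adj(A) = \Det(A)\, v w^T$, and dividing by the scalar $w^T v$ recovers the formula in the statement. The only genuinely nontrivial step is the identification $(-1)^{d-1} p_A'(0) = \Det(A)$, which hinges on the rank-$(d-1)$ hypothesis forcing $0$ to be a simple root of $p_A$ so that the characteristic polynomial contains exactly one factor of $\lambda$; everything else is mechanical substitution into Theorem~\ref{thm:adjugate_matrix}.
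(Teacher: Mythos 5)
Your proof is correct and follows essentially the same strategy as the paper's: specialize Theorem~\ref{thm:adjugate_matrix} at $\lambda=0$, pull the sign out of $\adj(-A)$, and identify $p_A'(0)$ with $\pm\Det(A)$. The one substantive difference is the route to $p_A'(0)=(-1)^{d-1}\Det(A)$: you factor $p_A(\lambda)=\lambda\prod_{i=2}^{d}(\lambda-\lambda_i)$ and differentiate, invoking the definition of $\Det$ as the product of nonzero eigenvalues, whereas the paper argues via the coefficients of the characteristic polynomial and the characterization of the pseudo-determinant as its smallest nonzero coefficient (citing \citet{knill2014cauchybinet}). Your route is more self-contained and, notably, carries the correct signs where the paper's write-up contains two compensating errors: it states $\adj(cA)=c^{d}\adj(A)$ and $p_A'(0)=(-1)^{d}\Det(A)$, whereas both exponents should be $d-1$ (adjugate entries are $(d-1)\times(d-1)$ minors); the two errors cancel, so the paper's conclusion is unaffected. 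You are also right to flag the implicit nondegeneracy assumption: $\rank A=d-1$ only forces the \emph{geometric} multiplicity of the eigenvalue $0$ to be one, and if $0$ sits in a Jordan block of size $\geq 2$ then $w^Tv=0$ and $p_A'(0)=0$, so the stated formula degenerates. The paper does not address this; it is harmless in the application, since for the padded Jacobian one has $y^Tx=1\neq 0$.
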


\begin{proof}
Since $\rank A = d-1$, then there exists one zero eigenvalue.
For $\lambda=0$ Theorem \ref{thm:adjugate_matrix} reduces to $w^T v \adj ( - A) = p'_A(0) v w^T$.
We can now use the following property of the adjugate matrix: $\adj (cA) = c^d \adj(A)$ for any scalar $c$.
As a particular case, for $c=-1$ we have $\adj (-A) = (-)^d \adj (A)$.
Therefore, we obtain that $w^T v \adj (A) = (-)^d p'_A(0) v w^T$.
Now, the pseudo-determinant is equal to the smallest non-zero coefficient of the characteristic polynomial $p(\lambda) = \det (\lambda \mathbb{I}_d - A)$\citep{knill2014cauchybinet}.
If we expand the definition we obtain $p(\lambda) = (-)^d p(A - \lambda \mathbb{I}) = p_0 \lambda^d + (-) p_1 \lambda^{d-1} + (-)^k p_k \lambda^{d-k} + (-)^d p_d$ (see Proposition 2, 8. in \citet{knill2014cauchybinet}). 
Since $A$ has rank $d-1$, $p_d=0$ and the smallest non-zero coefficient is $p_{d-1}$.
Finally, note that $p'_{A}(0) = (-)^d p_{d-1} = (-)^d \Det (A)$.
\end{proof}

\begin{lemma}
    Consider the special case where $A \in \R^{d \times d}$ and $\rank A = d-1$. Then,
    \begin{equation}
    \Tr (\adj (A)) = \Det (A) \;.
    \label{eq:trace_adjugate}
    \end{equation}
    \label{thm:lemma2}
\end{lemma}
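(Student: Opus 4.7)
The plan is to invoke Lemma~\ref{thm:lemma1} essentially as a black box. Under the same hypothesis $\rank A = d-1$, that lemma already gives the explicit rank-one form
\[
\adj(A) = \frac{\Det(A)}{w^T v}\, v w^T,
\]
where $v$ and $w$ are, respectively, a right and a left eigenvector of $A$ associated with the eigenvalue $0$, i.e.\ vectors spanning the one-dimensional null spaces of $A$ and $A^T$. So the bulk of the work has already been absorbed into the preceding lemma, and the present statement should follow very quickly.

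The second and essentially only remaining step is to take the trace of both sides of this identity and exploit the cyclic property $\Tr(v w^T) = w^T v$. The scalars $w^T v$ in numerator and denominator then cancel, leaving $\Tr(\adj(A)) = \Det(A)$, which is exactly the claim. The argument is therefore a one-liner once Lemma~\ref{thm:lemma1} is in hand.

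I do not expect any substantial obstacle; the only point needing care is that division by $w^T v$ in the formula requires $w^T v \neq 0$. This is implicit in the applicability of Lemma~\ref{thm:lemma1}, and under our hypothesis it corresponds to the zero eigenvalue of $A$ having equal algebraic and geometric multiplicities (both equal to one, as is implicitly assumed whenever $\Det(A)$ is treated as a product of exactly $d-1$ non-zero eigenvalues). If one preferred a self-contained route that bypasses Lemma~\ref{thm:lemma1}, an equivalent alternative would be to observe that $\Tr(\adj(A)) = \sum_i \det(A_{ii})$ is the sum of the $(d-1)\times(d-1)$ principal minors of $A$, and hence equals the elementary symmetric function $e_{d-1}(\lambda_1, \ldots, \lambda_d)$ of its eigenvalues; since a single zero eigenvalue kills every term of this sum except the one in which it is omitted, the result collapses to the product of the $d-1$ non-zero eigenvalues, which is $\Det(A)$. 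However, going through Lemma~\ref{thm:lemma1} is both shorter and more consistent with the style of the preceding proofs.
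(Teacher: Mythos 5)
Your proof is correct and follows exactly the paper's own route: take the trace of the identity in Lemma~\ref{thm:lemma1} and use the cyclic property $\Tr(v w^T) = w^T v$ to cancel the scalar. The extra remarks on $w^T v \neq 0$ and the alternative via principal minors are fine but not needed.
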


\begin{proof}
    We take the trace of the left and right-hand side of Eq.~\eqref{eq:theorem_lambda_0}.
    We get $\Tr (\adj(A)) = \frac{\Det (A)}{w^T v} \Tr (v w^T) = \frac{\Det (A)}{w^T v} \Tr(w^T v)= \Det (A)$.
    In the first equality we used the linearity of the trace and factored out the constants $\Det (A)$ and $w^T v$.
    Lastly, we used the cyclic property of the trace $\Tr(w^T v) = \Tr(v w^T)$.
\end{proof}

\subsection{Proof of Theorem~\ref{thm:spherical_flow}}\label{sec:appendix_spherical_flow}
\sphericalflow*
\begin{proof}
    We start the proof by noting that the transformation $\T \coloneqq \Tsc \circ \T_{r} \circ \T_{\theta}$ is injective because $\T_{\theta}$ and $\Tsc$ are bijective and $\T_{r}$ is injective.
    The injectivity of $\T_{r}$ can be easily seen by noting that $\bm{\theta}\neq \bm{\theta}' \implies [\bm{\theta},r(\bm{\theta})]\neq [\bm{\theta}',r(\bm{\theta}')]$, independently of $r(\cdot)$.
    Since the Jacobian matrix $J_{\T}\in \R^{d\times d-1}$ is not squared, we cannot use the usual property of bijections in Eq.~\eqref{eq:normalizing_flow_determinant}.
    Instead, we use the definition of Jacobian determinant for injective functions in  Eq.~\eqref{eq:manifold_flow_change_variable}:
    \begin{equation}
            \det J_{\T} = \sqrt{\det\Big(J_{\T}^T J_{\T}\Big)} = \sqrt{\det\Big( \big(J_{\Tsc} J_{\T_r} J_{\T_{\theta}} \big)^T \big(J_{\Tsc} J_{\T_r} J_{\T_{\theta}} \big)  \Big)}\;,
    \end{equation}
    where $J_{\T_r} \in \mathbb{R}^{d \times d-1}$ and $J_{\Tsc} \in \mathbb{R}^{d \times d}$.
    According to Remark~\ref{rmk:jacobian_determinant_injective} we can factor out the Jacobian determinant of $\T_{\theta}$, i.e. the bijection that precedes the dimensional inflation step with $\T_{r}$:
    \begin{equation}
        \det J_{\T} = \det J_{\T_{\theta}} \det  J_{\Tsc \circ \T_r} =  \det J_{\T_{\theta}} \sqrt{ \det\Big( \big(J_{\Tsc} J_{\T_r} \big)^T \big(J_{\Tsc} J_{\T_r} \big)  \Big)}\;.
    \end{equation}
    The term $\det J_{\T_{\theta}}$ is the standard Jacobian determinant for bijective layers and can be computed efficiently.
    We are then left to compute $\det  J_{\Tsc \circ \T_r}$.
    We now consider the matrix $\Tilde{J}_{\T_r}\coloneqq [ J_{\T_r} \; \bm{0}_{d \times 1}] \in \mathbb{R}^{d\times d}$ and substitute the determinant with the pseudo-determinant:
    \begin{equation}
        \det  J_{\Tsc \circ \T_r} = \sqrt{ \det \Big( J_{\T_r}^T J^* J_{\T_r} \Big)}
        = \sqrt{ \Det \Big( \Tilde{J}_{\T_r}^T J^* \Tilde{J}_{\T_r} \Big)}\;,
    \end{equation}
    where $J^*\coloneqq  J_{\Tsc} ^T  J_{\Tsc} \in \mathbb{R}^{d \times d}$.
    With $\Det$ we denote the pseudo-determinant, which is defined as the product of all non-zero eigenvalues.
    The second equality follows from the fact that $J_{\T_{r}}^T J^* J_{\T_{r}}$ and $\Tilde{J}_{\T_{r}}^T J^* \Tilde{J}_{\T_{r}} $ have the same spectrum up to zero eigenvalues, so the determinant of the former coincides with the pseudo-determinant of the latter (by definition).
    To see that they share the same spectrum up to one zero eigenvalue, consider the explicit structure of the matrix product:
    \begin{equation}
         \Tilde{J}_{\T_r}^T J^* \Tilde{J}_{\T_r} = 
         \begin{bmatrix}
             J_{\T_r}^T J^* J_{\T_r}  & \bm{0}_{d-1 \times 1}\\
             \bm{0}_{1 \times d-1} & 0 
        \end{bmatrix}   \;.
    \end{equation}
    The rest of the proof is based on the key observation that $\Tilde{J}_{\T_{r}}$ has rank $d-1$ or, equivalently, that its null space is one-dimensional.
    As a consequence, we can use Lemma~\ref{thm:lemma2} and re-write the pseudo-determinant in terms of the trace of the adjugate matrix:
    \begin{equation}
    \begin{aligned}
        \Det \Big(\Tilde{J}_{\T_r}^T J^* \Tilde{J}_{\T_r}\Big) &=\Tr \Big(\adj \big(\Tilde{J}_{\T_r}^T J^* \Tilde{J}_{\T_r}\big)\Big) \\
        &= \Tr \Big(\adj \big(\Tilde{J}_{\T_r}^T\big) \adj \big(J^*\big) \adj \big(\Tilde{J}_{\T_r}\big) \Big) \\
        &= \det \big(J^*)  \Tr \Big(\adj \big(\Tilde{J}_{\T_r}\big) \big(J^*\big)^{-1} \adj \big(\Tilde{J}_{\T_r}^T\big) \Big)\;.
        \label{eq:trace_of_adjugate_proof2}
    \end{aligned}
    \end{equation}
    In the second equality we used the property that $\adj (A B) = \adj (B) \adj (A)$ for any $ A, B \in \mathbb{R}^{d \times d}$, which easily generalizes to $\adj (A B C) = \adj (C) \adj (B) \adj (A)$.
    Lastly, if $A$ is invertible, $\adj (A) = \det (A) A^{-1} $.
    In this case $J^* = J_{\Tsc}^T J_{\Tsc}$ has full rank and is thus invertible. 
    Since the trace is a linear operator we can take out $\det (J^*)$, which is a constant.

Since $\Tilde{J}_{\T_r}$ has rank $d-1$, its nullspace is one dimensional and we can pick $x\in\mathbb{R}^d \; | \; \Tilde{J}_{\T_r} x = 0$ to span the entire nullspace.
The same holds for $\Tilde{J}_{\T_r}$, or equivalently for the left nullspace of $J_{\T_r}$, and we can pick $y \in \mathbb{R}^d \; | \; \Tilde{J}_{\T_r}^T y = 0$.
We can easily compute $x$ and $y$ by looking at the structure of $\Tilde{J}_{\T_r}$:
\begin{equation}
     \Tilde{J}_{\T_r} = 
     \begin{bmatrix}
         1 & 0 & \cdots & 0 & 0\\
         0 & 1 & & & 0 \\
         \vdots &  & \ddots & & \vdots \\
         0 & & & 1 & 0 \\
         \frac{\partial r}{\partial \theta_1} & \frac{\partial r}{\partial \theta_2} & \cdots & \frac{\partial r}{\partial \theta_{d-1}} & 0
    \end{bmatrix}
    \quad x \coloneqq
    \begin{bmatrix}
        0 \\ 0 \\ \vdots \\ 0 \\ 1
    \end{bmatrix}
    \quad y \coloneqq
    \begin{bmatrix}
        -\frac{\partial r}{\partial \theta_1} \\ -\frac{\partial r}{\partial \theta_2} \\ \cdots \\ -\frac{\partial r}{\partial \theta_{d-1}} \\ 1
    \end{bmatrix}    \;.
\end{equation}

We now make use of Lemma~\ref{thm:lemma1} for $\Tilde{J}_{\T_{r}}$, which gives us
\begin{equation}
    \adj (\Tilde{J}_{\T_r}) =\frac{\Det (\Tilde{J}_{\T_r})}{y^T x} x y^T \:.
    \label{eq:adjugate_to_det_proof2}
\end{equation}
We can now substitute Eq.~\eqref{eq:adjugate_to_det_proof2} in Eq.~\eqref{eq:trace_of_adjugate_proof2}:
\begin{equation}
\begin{aligned}
    \Det \Big(\Tilde{J}_{\T_r}^T J^* \Tilde{J}_{\T_r}\Big)
    &= \det \big(J^*\big) \frac{\Det(\Tilde{J}_{\T_r})^2}{(y^T x)^2} \Tr \Big( x y^T \big(J^*\big)^{-1} y x^T \Big) \\ 
    &= \det \big(J_{\Tsc}^TJ_{\Tsc}\big) \frac{\Det(\Tilde{J}_{\T_r})^2 x^Tx}{(y^T x)^2} \Tr \Big( y^T \big(J^*\big)^{-1} y  \Big)\\
    &= \det \big(J_{\Tsc}\big)^2 \| \big(J_{\Tsc}^T\big)^{-1} \;y  \|_2^2\;.
\end{aligned}
\end{equation}
In the first equality we used the fact that $\adj(A^T) = \adj(A)^T$ and we factored out $\Det(\Tilde{J}_{\T_r})^2$ and $(y^T x)^2$, which are constants.
In the second equality we used the cyclic property of the trace and factored out $x^T x$.
Lastly, we substituted the numerical values $\Det (\Tilde{J}_{\T_r})=1$, $y^t x = 1$ and $x^Tx=1$.

We can now analyze the time complexity required to evaluate Eq.~\eqref{eq:jacobian_determinant_starlike_manifolds}.
The Jacobian determinant for spherical to Cartesian coordinates is known~\citep{muleshkov2016easyproof}
\begin{equation}
    \det J_{s \rightarrow c} = (-)^{d-1} r^{d-1} \prod_{k=1}^{d-2} \sin ^{d-k-1} \theta_k \;
    \label{eq:jacobian_determinant_spherical}
\end{equation}
and can be computed efficiently in $O(d)$ time.
Therefore, we only need to show that also $w = \big(J_{\Tsc}^T\big)^{-1} \;y$ can be computed efficiently.
Solving the full linear system would require a complexity of $O(d^3)$.
However, we can exploit the almost-triangular structure of
\begin{equation}
     J_{\Tsc}^T = 
     \begin{bmatrix}
         \frac{\partial x_1}{\partial \theta_1} & \frac{\partial x_2}{\partial \theta_1} & \cdots & \frac{\partial x_{d-1}}{\partial \theta_1} & \frac{\partial x_d}{\partial \theta_1}\\
         0 & \frac{\partial x_2}{\partial \theta_2} & \cdots  & \frac{\partial x_{d-1}}{\partial \theta_2} & \frac{\partial x_d}{\partial \theta_2} \\
         \vdots &  & \ddots & & \vdots \\
         0 & 0 & & \frac{\partial x_{d-1}}{\partial \theta_{d-1}} & \frac{\partial x_d}{\partial \theta_{d-1}} \\
         \frac{\partial x_1}{\partial r} & \frac{\partial x_2}{\partial r} & \cdots & \frac{\partial x_{d-1}}{\partial r} & \frac{\partial x_d}{\partial r}
    \end{bmatrix}
    \label{eq:almost_triangular_jacobian}
\end{equation}
to solve the linear system in $O(d^2)$.
One possibility is to perform one step of Gaussian elimination, which requires $O(d^2)$, and make the linear system triangular.
The resulting triangular system can be solved in $O(d^2)$.
Note that we can compute $J_{\Tsc}$ very efficiently and analytically (see Eq.~\eqref{eq:analytical_jacobian_spherical_cartesian}), without requiring autograd computations.
Overall, the determinant of the full transformation $\T$ can be obtained as
\begin{equation}
    \det J_{\T} = \det J_{\T_{\theta}} \det \big(J_{\Tsc}\big)^2 \| \big(J_{\Tsc}^T\big)^{-1} \;y  \|_F^2
\end{equation}
and can be computed efficiently in $O(d^2)$.
\end{proof}

\subsection{Implementation details}\label{sec:appendix_implementation_details}
\paragraph{Implementation of injective flows for star-like manifolds}
We provide some details about the implementation of the proposed injective flows and particularly for star-like manifolds in Cartesian coordinates as in Figure~\ref{fig:architecture_starlike_flow}.
We implement the layers in three steps:
\begin{itemize}
    \item \textbf{bijective layers} $\T_{z}$ \textbf{and} $\T_\theta$. 
    The first bijection $\T_{z}:\bm{z} \mapsto \bm{z'}$ consists of arbitrary (conditional) bijective layers conditioned on the parameter $\lambda$.
    The conditioning is realized with an expressive Residual network.
    Then, $\T_\theta:\bm{z'} \mapsto \bm{\theta}$ maps the transformed $\bm{z'}$ into spherical angles $\bm{\theta}\in U_\theta^{d-1}$.
    This last transformation is also a bijection that can be implemented with an element-wise non-linear activation like Sigmoid (hence diagonal Jacobian).
    Otherwise, one could use a base distribution which is already defined on the $d-1$ spherical angles and use a bijective transformation that transforms $\bm{\theta}$ within their domain $U_\theta^{d-1}$ as $\T_{\textrm{circ}}:\bm{\theta} \in U_\theta^{d-1} \mapsto \bm{\theta'}\in U_\theta^{d-1}$.
    In short, any bijective layers followed by a element-wise non linear function that maps to spherical angles could be used.
    We use the circular spline layers proposed by~\citet{rezende2020flows_tori_spheres} because they allow to nicely integrate the boundary conditions arising from the use of spherical coordinates.
    These layers are based on the neural spline layers proposed in~\citet{durkan2019neural_spline}, which consist of a combination of $K$ segments where each segment is a simple rational-quadratic function.
    The flexibility of the layers increase with $K$.
    This way the transformation can be designed such that it is monotonically increasing (hence invertible) and such that it fulfills given boundary conditions.
    The main difference with \citet{durkan2019neural_spline} is that circular splines require periodic boundary conditions in order to enforce continuity of the density at the boundary. This way we can define bijections from $[0,2\pi]$ to itself.
    As a consequence, circular splines require the base distribution to be defined on $U_\theta^{d-1}$.
    In practice, we use the distribution of spherical angles, which results in uniform points on the $d-1$ dimensional sphere, and can be implemented efficiently.
    We use the implementation of circular layers provided in \citet{normalizingflows2023}.
    Such construction scales well with the dimensions.
    
    \item \textbf{injective layer} $\T_r$. 
    The injective step $\T_r:\bm{\theta}\mapsto[\theta, r(\bm{\theta})]^T$ only consists in padding the spherical angles with some specified radius function $r(\bm{\theta})$.
    The specific expression for the radius function depends on the manifold considered and is detailed in Eq.~\eqref{eq:lp_norm_parametrization} and Eq.~\eqref{eq:simplex_parametrization} for the $l_p$ (pseudo-) norm ball and for the probabilistic simplex $\mathcal{C}^d$, respectively.
    In variational inference settings $\T_r$ is not a learnable transformation.
    In maximum likelihood settings trained on samples, if we assume the samples were generated from a $d-1$ star-like manifold, $r(\bm{\theta})$ can be implemented with a neural network and made learnable.
    This would allow to learn the manifold and would provide with a very practical global parametrization.
    \item \textbf{bijective layer} $\Tsc$: the bijective layer $\Tsc: [\bm{\theta}, r(\bm{\theta})] \mapsto \bm{x}$ simply implements the spherical to Cartesian transformation in Eq.~\eqref{eq:spherical_to_cartesian}, which is a bijection and can be implemented efficiently. $\Tsc$ is not a trainable transformation.
\end{itemize}

For the implementation we rely on the (conditional) normalizing flow library \emph{FlowConductor}\footnote{\url{https://github.com/FabricioArendTorres/FlowConductor}}, which was introduced in~\citet{negri2023conditional} and~\citet{torres2024lflow}. 

\paragraph{Efficient implementation of the Jacobian of spherical to Cartesian transformation}
In order to compute the determinant in Eq.~\eqref{eq:jacobian_determinant_starlike_manifolds} we need to compute the Jacobian determinant of the transformation from spherical to Cartesian coordinates $J_{\Tsc}^T$.
By looking at the definition of the coordinate transformation in Eq.~\eqref{eq:spherical_to_cartesian}, we can easily derive the following expression:
\begin{equation}
     J_{\Tsc}^T = 
     \begin{bmatrix}
         -r s_1 & r c_1 c_2 & \cdots & rc_1 s_2 \ldots s_{d-2} c_{d-1} & r c_1 s_2 \ldots s_{d-2} s_{d-1}  \\
         0 & -r s_1 s_2 & \cdots & rs_1 c_2 \ldots s_{d-2} c_{d-1} & rs_1 c_2 \ldots s_{d-2} s_{d-1}  \\
         0 & 0 & \ddots & \vdots & \vdots \\
         0 & 0 & \ldots & -r s_1 s_2 \cdots s_{d-2} s_{d-1}  &  s_1 s_2 \ldots s_{d-2} c_{d-1}\\
         c_1 & s_1 c_2 & \cdots & s_1 s_2 \ldots s_{d-2} c_{d-1} & s_1 s_2 \ldots s_{d-2} s_{d-1}
    \end{bmatrix}
    \label{eq:analytical_jacobian_spherical_cartesian}
\end{equation}
where we used the shorthand $s_i=\sin\theta_i$ and $c_i=\cos\theta_i$.
This allows to compute $J_{\Tsc}^T$ extremely efficiently without requiring to use autograd computations and results in a significant speed up.

\paragraph{Parametrization of $\bm{l_p}$ (pseudo-) norm balls}
Here we show how to parametrize the $l_p$ (pseudo-) norm balls in spherical coordinates.
Let the $l_p$ (pseudo-) norm of $\bm{x}\in\R^d$ be defined as $\|\bm{x}\|_p = (|x_1|^p+\ldots+|x_d|^p)^{1/p}$ with $p>0$.
We consider now the manifold defined by $\|\bm{x}\|_p=t$ for some $k\in\R_{>0}$.
If we write $\bm{x}$ in spherical coordinates according to Eq.~\eqref{eq:spherical_to_cartesian}, we can take the radius $r$ outside of the norm and express it as a function of the $d-1$ spherical angles as:
\begin{equation} 
r(\theta_1, \ldots, \theta_{d-1}) = \frac{t}{\bigg( |\cos \theta_1|^p +  \sum\limits_{i=2}^{d-1} \bigg|\cos \theta_i \prod\limits_{k=1}^{i-1} \sin\limits \theta_k \bigg|^p + \bigg|\prod\limits_{k=1}^{d-1} \sin \theta_k \bigg|^p \bigg)^{1/p}} \;.
    \label{eq:lp_norm_parametrization}
\end{equation}
We can use this expression to parametrize the $l_p$ norm balls with the proposed injective flows.
Similarly, we can also parametrize the probabilistic simplex $\mathcal{C}^d$.
To see this consider the $l_1$ norm ball $\|\bm{x}\|_1=|x_1|+\ldots+|x_d|$.
If we restrict the domain to the positive quadrant $\bm{x}\in\R^d_{\geq 0}$ and set the norm to 1, the resulting manifold is defined as $\|\bm{x}\|_1=x_1+\ldots+x_d=1$ and coincides with $\mathcal{C}^d$.
The radius is then parametrized by
\begin{equation} 
r(\theta_1, \ldots, \theta_{d-1}) = \frac{1}{\cos \theta_1 +  \sum\limits_{i=2}^{d-1} \cos \theta_i \prod\limits_{k=1}^{i-1} \sin \theta_k + \prod\limits_{k=1}^{d-1} \sin \theta_k} \quad \text{with} \quad \theta_i\in [0,\pi/2] \; \forall i\;,
    \label{eq:simplex_parametrization}
\end{equation}
where the constraint on the angles enforces $\bm{x}\in\R^d_{\geq 0}$.
Note that it is straightforward to analytically derive the expression for the partial derivatives $\frac{\partial r}{\partial \theta_i}$ in Eq.~\eqref{eq:simplex_parametrization}.
This makes the computation of $y$ in Eq.\eqref{eq:jacobian_determinant_starlike_manifolds} more efficient than computing the gradients with autograd and results in a speed up.

\subsection{Applications: further details}\label{sec:appendix_applications}
\subsubsection{Architecture}
We use two different architectures. One is for standard NFs that we use for the subjective penalized likelihood regression problem. The other architecture is the injective flow that is used for the objective Bayes version of the regression problem and the portfolio diversification application.

\paragraph{Standard NF}
It consists of a normal distribution as base distribution. Then we use 5 blocks of permutation transformation, a sum of Sigmoids layer \citep{negri2023conditional} and an activation norm. The sum of Sigmoid layer consists each of 30 individual Sigmoid functions in three blocks.

\paragraph{Injective flows}
The base distribution is either the probabilistic simplex or the complete $\|\bm{\beta}\|_1 = 1$ depending on the application. We follow this with again 5 layers of the circular bijective layers \citep{rezende2020flows_tori_spheres}, each consisting of three blocks with 8 bins. At the end these values are mapped to Cartesian coordinates with the proposed dimensionality inflation step.


\subsubsection{Training}
Both the standard NFs and the injective flows are trained by minimizing the reverse KL divergence with respect to the (unnormalized) target density $p(\bm{x})$:
\begin{equation}
q_{\theta^*}(\bm{x}) = \argmin_{\theta\in\Theta} \mathrm{KL}\big(q_\theta(\bm{x})||p(\bm{x})\big) = \argmin_{\theta\in\Theta} \E_{\bm{x}\sim q_\theta} \bigg[ \log \frac{q_\theta(\bm{x})}{p(\bm{x})} \bigg] \: .
\label{eq:variational_inference}
\end{equation}
We optimize the reverse KL divergence using Adam~\citep{kingma2017adam} as optimizer with default parameters.
Notably, all trained flows converged in a matter of minutes on a standard consumer-grade GPU (RTX2080Ti in our specific case).

\subsubsection{Penalized likelihood regression}
In the next paragraph we provide further details on the experiment introduced in Section~\ref{sec:objective_bayes}, which involves the penalized likelihood model defined in Eq.~\eqref{eq:map_penalized_regression}. 

\paragraph{Synthetic dataset creation}
The synthetic regression dataset is created by sampling $X^*$ from a 5 dimensional Wishart distribution $W_5(7, I)$.
The response variable $y$ is then created by $X^* \bm{\beta}^* + \epsilon$ where $\bm{\beta}^*$ is standard normal distributed and $\epsilon$ is normal distributed with zero mean and a standard deviation of 4.0. 

\paragraph{Subjective Bayes}
\begin{figure}[t]
  \centering
  \includegraphics[width=1.\linewidth]{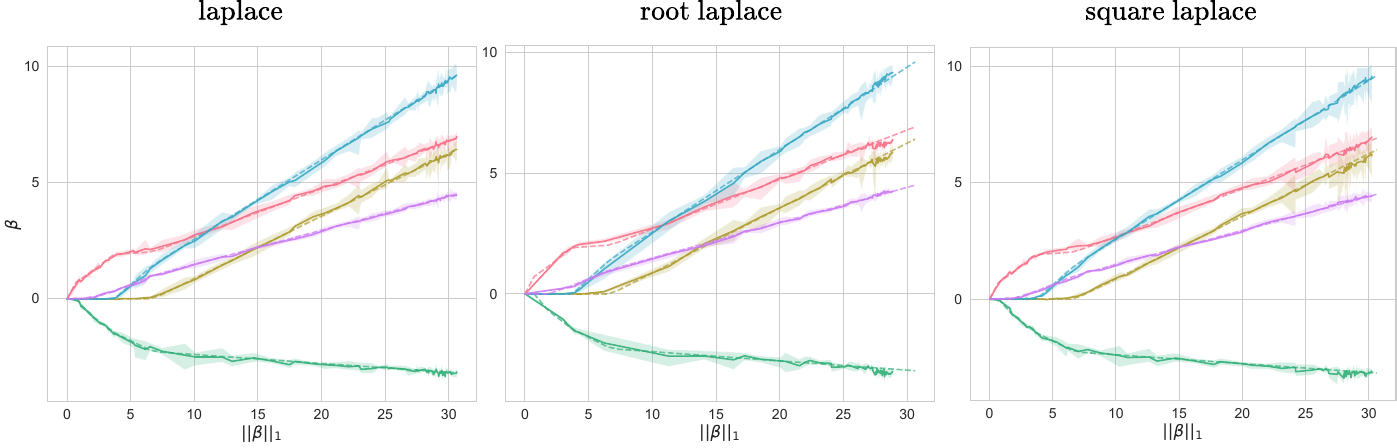}
  \caption{MAP solution for Laplace prior, root laplace and square laplace, which are all monotonic transformation of the Laplace distribution. The MAP solution paths coincide.}
  \label{fig:map_solution}
\end{figure}

\begin{figure}[t]
  \centering
  \includegraphics[width=.7\linewidth]{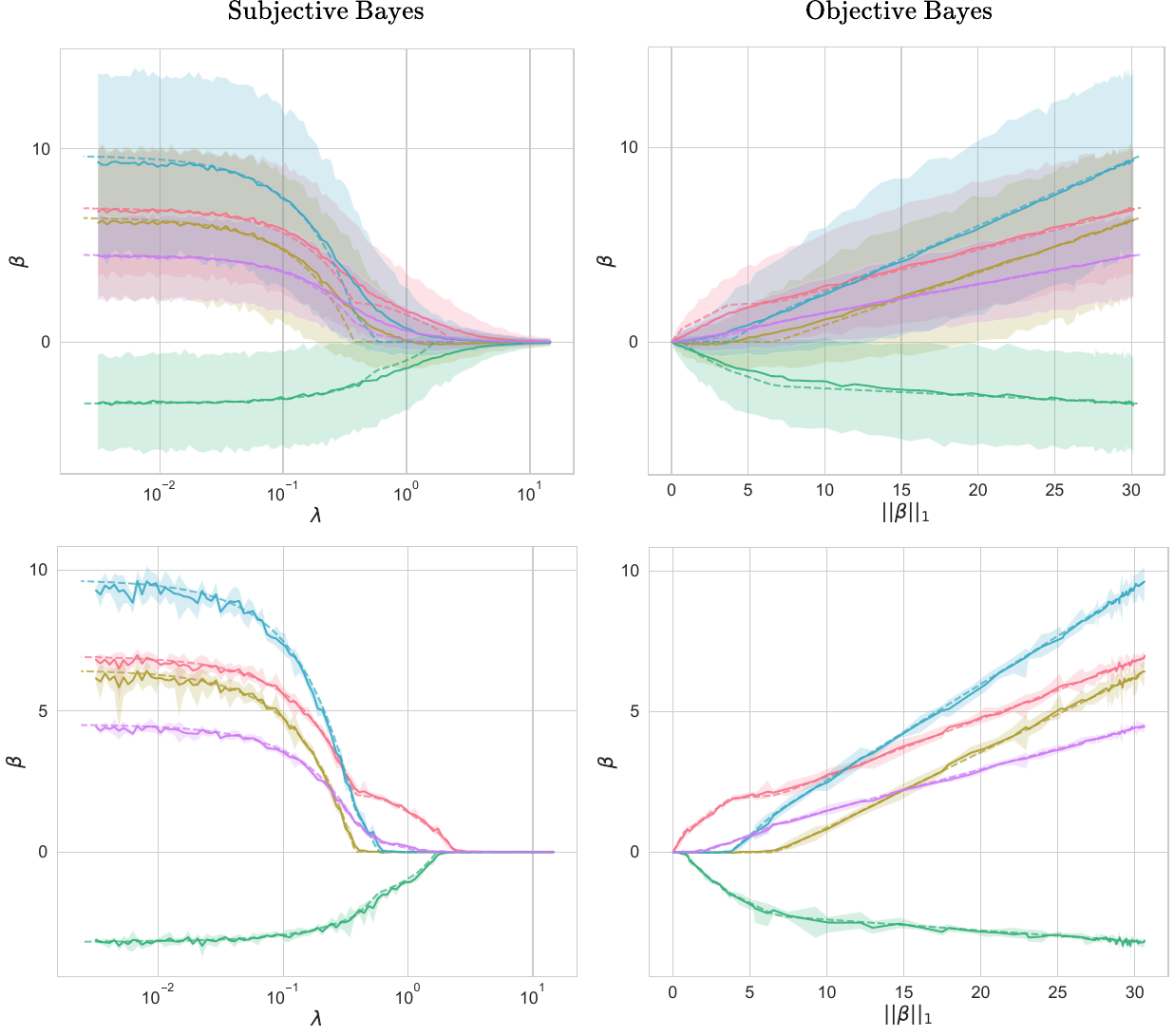}
  \caption{Solution paths for subjective prior as a function of $\lambda$, and objective priors, as a function of the norm $\|\bm{\beta}\|_1$. Below we report the solution paths in the MAP limit.}
  \label{fig:solution_path}
\end{figure}
The subjective Bayes relies on a prior $p(\bm{\beta}|\lambda)$. The Laplace prior is given by
\begin{equation}
    p_{lap}(\bm{\beta}|\lambda) \propto \textstyle \prod_i\exp\{-\lambda |\beta_i|^p\}.
\end{equation}
The two other test priors are $p_{sq}(\bm{\beta}|\lambda) \propto p_{lap}(\bm{\beta}|\lambda)^2$ and $p_{rt}(\bm{\beta}|\lambda) \propto p_{lap}(\bm{\beta}|\lambda)^{1/2}$. Any monotonic transformation may change the $\lambda$-axis but leave the MAP solution path unchanged. This can be seen in Figure~\ref{fig:map_solution} where we show the MAP solution path for different subjective priors. For this visualization we reparametrize the axis such that the $\lambda$-axis is transformed into a $\|\bm{\beta}\|_1$-axis. This makes clear that the solution paths are equivalent.

\paragraph{Objective Bayes}
The objective Bayes approach circumvents the definition of $p(\bm{\beta}|\lambda)$. The flow is directly defined on the manifolds coinciding with the contour lines of $p(\bm{\beta}|\lambda)$. As such, samples from the posterior all share a chosen norm value $\|\bm{\beta}\|_1 = k$. Figure~\ref{fig:solution_path} highlights the different parametrizations of the subjective and objective approach.

\subsection{Approximate vs exact Jacobian: sample quality and runtime}
\begin{figure}[t]
  \centering
  \includegraphics[width=1.\linewidth]{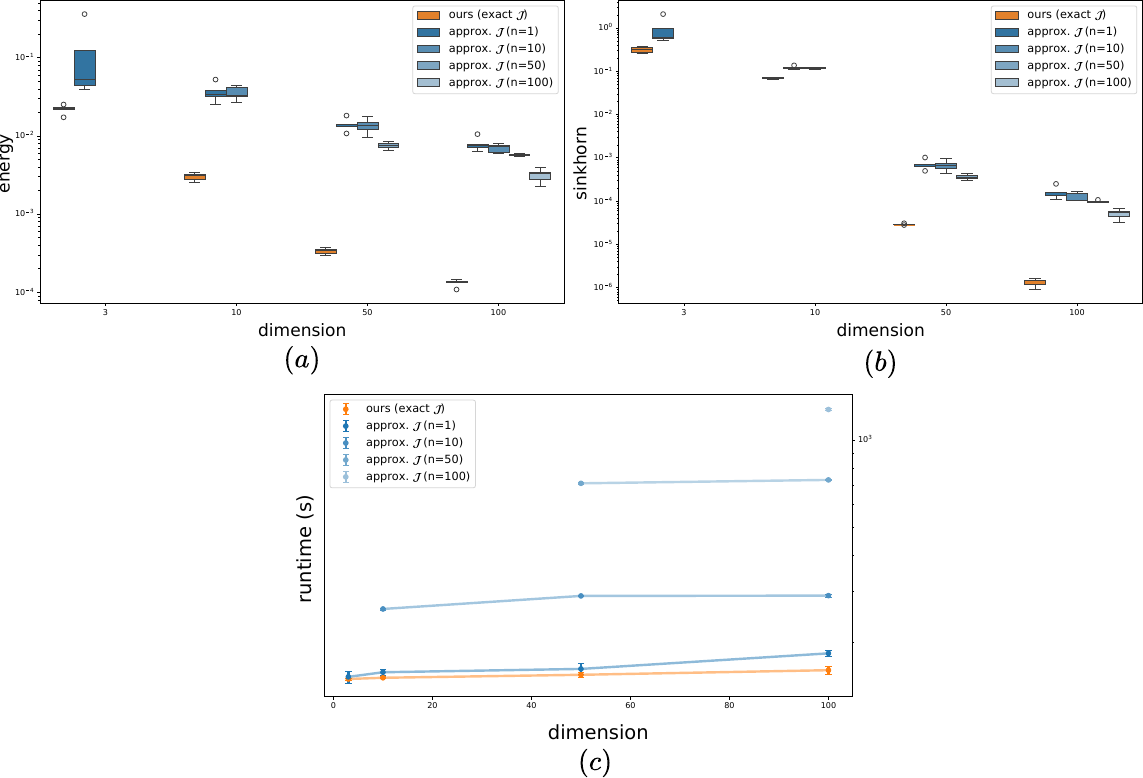}
  \caption{Additional comparison between two identical models trained with the exact Jacobian, as we propose, according to Eq.~\eqref{eq:jacobian_determinant_starlike_manifolds} and with the Hutchinson trace estimator, commonly used in injective papers. The experimental setup is described in Section~\ref{sec:efficacy_efficiency}. We compare samples of the two models with samples from the true distribution as a function of the dimension and of the number of samples used in the Hutchinson trace estimator ($n=1,10,50,100$). Note that the number of samples is upper bounded by the dimensionality of the problem. We quantify the similarity of the samples with the samples from the true distribution with (a) the Energy distance MMD (``energy'') and (b) the Sinkhorn divergence (``sinkhorn''). The Energy distance MMD is computed using the kernel $-\|x-y\|_2$. The Sinkhorn divergence interpolates between Wasserstein (blur=0) and kernel (blur=$\infty$) distances and we used the default value blur=$0.05$. Below (c) we show the runtime of our exact model compared to the approximate one as a function of increasing number of samples.
  }
  \label{fig:metric_comparison}
\end{figure}

\subsubsection{Bayesian mixing model}

\paragraph{Comparison with MCMC sampler}
We compare the proposed injective flows with an ad-hoc MCMC sampler in the conjugate case (Dirichlet prior and multinomial likelihood).
In this setting we know the posterior analytically and we can compare it to both the MCMC sampler and the proposed injective flow.
In this experiment the injective flow is trained to minimize the reverse KL divergence with the unnormalized target posterior (prior times likelihood). 
We compare the methods for increasing dimensionality ($d=15,30,50$) and in Figure~\ref{fig:mcmc_comparison} we show 95$\%$ credibility intervals of the posterior. 
Results show that (i) the proposed injective flow correctly estimates the posterior distribution for all dimensions and (ii) it outperforms the MCMC sampler already in low dimensions ($d=30,50$). 
Interestingly, this experiment also shows that our approach is able to learn very sparse solutions characterized by multiple modes.

\begin{figure}[b]
  \centering
  \includegraphics[width=1.\linewidth]{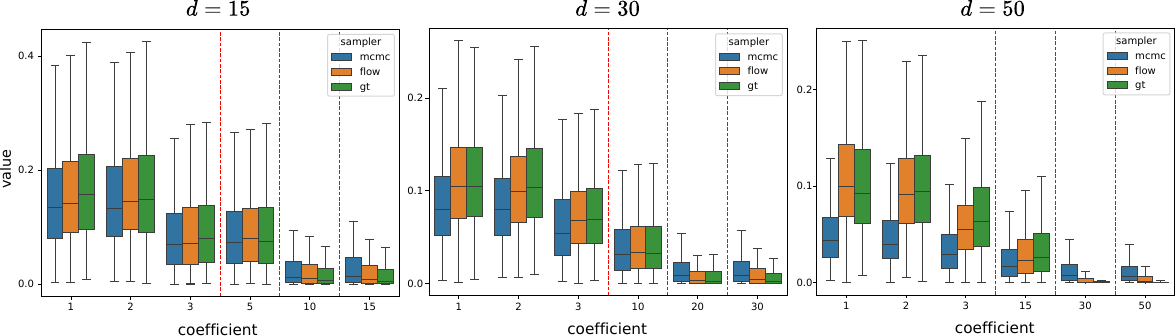}
  \caption{$95\%$ posterior credibility intervals for a Bayesian model with multinomial likelihood and Dirichlet prior. We compare the posterior obtained with samples from an ad-hoc MCMC sampler (``mcmc''), samples from the approximate posterior modeled by the proposed injective flow (``flow'') and with the analytical posterior (``gt''). We evaluate the samplers for increasing dimensionality of the problem $d=15, 30, 50$.}
  \label{fig:mcmc_comparison}
\end{figure}

\paragraph{Details about the MCMC sampler}
In Bayesian mixing models the posterior lives on the simplex, which is known to be challenging for MCMC samplers~\citep{altmann2014gaussian_simplex_review, baker2018sampling_simplex} because of the constraint that the sampled vector must be positive and sum up to one. 
Designing proposals close to the boundary is particularly difficult~\citep{baker2018sampling_simplex}. 
For sparse distributions, where mass is concentrated at the boundaries, this is especially noticeable and explains why the MCMC sampler struggles in high dimensions for sparse solutions.
What makes a fair comparison difficult is that various existing samplers~\citep{altmann2014gaussian_simplex_review, baker2018sampling_simplex} are suited for specific families of target distributions. 
For ideal performance we would need to change the entire sampler depending on the target, while we can use our flow without any fine-tuning. 

We compare to a Metropolis Hastings sampler that can handle mixtures of Gaussians targets well in a non-sparse setting. 
It uses proposals that are by construction on the manifold by sampling from a downscaled Dirichlet distribution centered on the current state. 
Despite trying different proposals we did not observe any relevant improvement. 
As shown, this sampler however struggles when significant sparseness is present.
Our sampler uses 1000 chains each with 100'000 samples. 
In contrast, the proposed injective flow did not require any fine-tuning and learnt sparse solutions. 
For a fair comparison, we set the runtime of the flow to match that of the MCMC samplers and used roughly similar memory footprint ($\pm 20\%$). 
Lastly, we checked that running the sampler for a significantly longer time ($5\times$) did not achieve the performance of the flow.

\paragraph{Portfolio optimization} 
In portfolio optimization the cumulative return is often of interest. Figure~\ref{fig:cumulative_return} shows the effect of the different priors on the cumulative return. 
The sparser priors lead to a slightly wider distribution of the return. 
In this example, this leads to the target index being a closely matched by some of the posterior samples, where the samples of the uniform prior seem to be further away from the target index in some parts of the time interval. 
The bottom row of the Figure~\ref{fig:cumulative_return} further shows the sampled sparsity patterns. 
These show that the sparse priors can lead to significantly different mixtures with similar data fitting quality. 

\begin{figure}[t]
  \centering
  \includegraphics[width=1.0\linewidth]{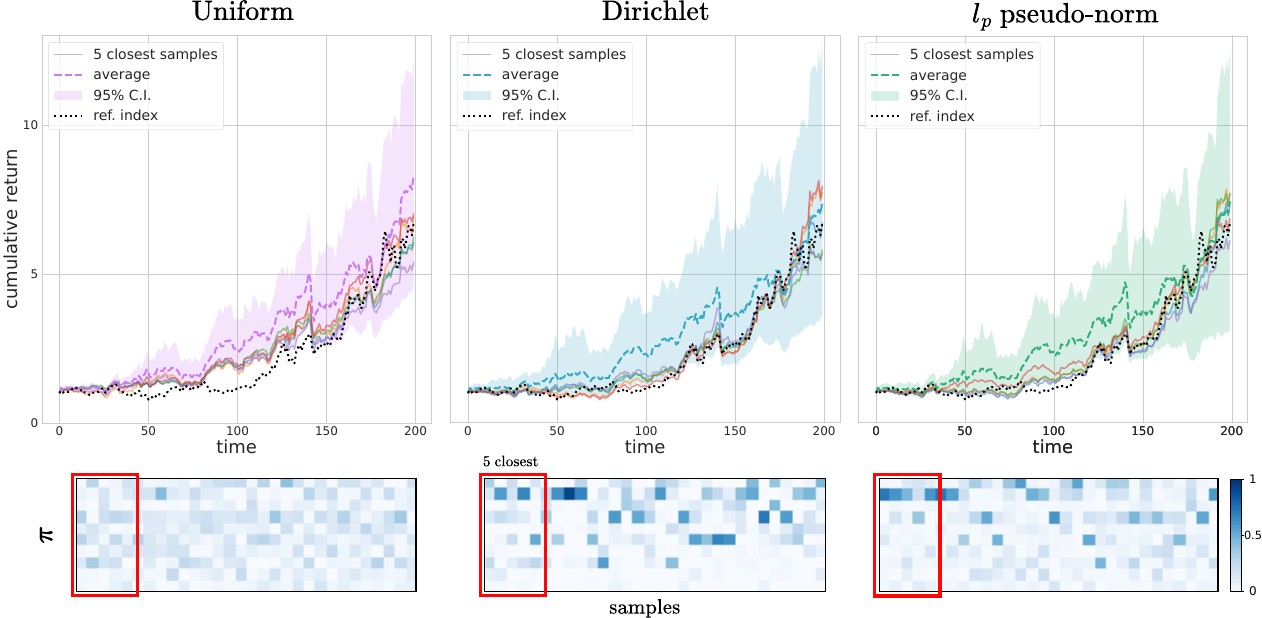}
  \caption{Cumulative return as a function of time. We plot $95\%$ posterior C.I. with the uniform prior, Dirichlet prior and $l_p$-norm prior. We also plot the 5 samples that are the closest to the ground truth cumulative return. In the bottom panel we visualize the weight samples as a heatmap to highlight sparsity patterns.}
  \label{fig:cumulative_return}
\end{figure}


\end{document}